\documentclass{article}

\usepackage[preprint]{bdu_2024}
\usepackage{bm}
\usepackage{cleveref}
\usepackage{todonotes}
\usepackage[makeroom]{cancel}
\usepackage{booktabs}
\usepackage{url}
\usepackage{subcaption}

\usepackage{natbib}
\usepackage{mathrsfs}

\bibliographystyle{apalike}
\setcitestyle{authoryear,open={(},close={)}, aysep{,}} 


\title{Distributionally Robust Optimisation with Bayesian Ambiguity Sets}
\author{%
   Charita Dellaporta$^*$ \\
  Department of Statistics\\
  University of Warwick\\
  \texttt{C.Dellaporta@warwick.ac.uk} \\
  \And
  Patrick O'Hara$^*$ \\
  Department of Computer Science \\
  University of Warwick\\
  \texttt{Patrick.H.O-Hara@warwick.ac.uk} \\
  \AND
  Theodoros Damoulas \\
  Department of Computer Science \& Department of Statistics \\
  University of Warwick \\
  \texttt{T.Damoulas@warwick.ac.uk} 
}

\begin{document}
\maketitle
\def\thefootnote{*}\footnotetext{These authors contributed equally to this work.}\def\thefootnote{\arabic{footnote}}
\begin{abstract}
Decision making under uncertainty is challenging since the data-generating process (DGP) is often unknown.
Bayesian inference proceeds by estimating the DGP through posterior beliefs about the model's parameters.
However, minimising the expected risk under these posterior beliefs can lead to sub-optimal decisions due to model uncertainty or limited, noisy observations.
To address this, we introduce Distributionally Robust Optimisation with Bayesian Ambiguity Sets (DRO-BAS) which hedges against uncertainty in the model by optimising the worst-case risk over a posterior-informed ambiguity set. We show that our method admits a closed-form dual representation for many exponential family members and showcase its improved out-of-sample robustness against existing Bayesian DRO methodology in the Newsvendor problem.
\end{abstract}

\section{Introduction} \label{sec:intro}
Decision-makers are regularly confronted with the problem of optimising an objective under uncertainty. 
Let $x \in \bR^d$ be a decision-making variable that minimises a stochastic objective function $f: \bR^d \times \Xi \to \bR$, where $\Xi$ is the data space and let $\bP^\star \in \cP(\Xi)$ be the data-generating process (DGP) where $\cP(\Xi)$ is the space of Borel distributions over $\Xi$. In practice, we do not have access to $\bP^\star$ but to $n$ independently and identically distributed (i.i.d.) observations $\cD := \xi_{1:n} \sim \bP^\star$. Without knowledge of the DGP, model-based inference considers a family of models $\cP_\Theta := \{ \bP_\theta : \theta \in \Theta \} \subset \cP(\Xi)$ where each $\bP_{\theta}$ has probability density function $p(\xi | \theta)$ for parameter space $\Theta \subseteq \bR^k$. In a Bayesian framework, data~$\cD$ is combined with a prior $\pi(\theta)$ to obtain posterior beliefs about $\theta$ through  $\Pi(\theta | \cD)$. Bayesian Risk Optimisation \citep{wu2018bayesian} then solves a stochastic optimisation problem:
\begin{align}\label{opt:bayes}
    \min_{x \in \bR^d}~{\bE_{\theta \sim \Pi(\theta \mid \cD)}\left[\bE_{\xi \sim \bP_\theta}[f(x, \xi)]\right]}.
\end{align}
However, our Bayesian estimator is likely different from the true DGP due to model and data uncertainty: the number of observations may be small; the data noisy; or the prior or model may be misspecified.
The optimisation problem \eqref{opt:bayes} inherits any estimation error, and leads to overly optimistic decisions on out-of-sample scenarios even if the estimator is unbiased: this phenomenon is called the optimiser's curse \citep{Kuhn2019}.
For example, if the number of observations is small and the prior is overly concentrated, then the decision is likely to be overly optimistic.

To hedge against the uncertainty of the estimated distribution, the field of Distributionally Robust Optimisation (DRO) minimises the expected objective function under the worst-case distribution that lies in an ambiguity set $U \subset \cP(\Xi)$.
Discrepancy-based ambiguity sets contain distributions 
that are close to a nominal distribution in the sense of some discrepancy measure such as the Kullback-Leibler (KL) divergence \citep{hu2013kullback}, Wasserstein distance \citep{Kuhn2019} or Maximum Mean Discrepancy \citep{staib2019distributionally}.
For example, some model-based methods \citep{iyengar2023hedging,michel2021modeling, michel2022distributionally} consider a family of parametric models and create discrepancy-based ambiguity sets centered on the fitted model. 
However, uncertainty about the parameters is not captured in these
works, which can lead to a nominal distribution far away from the DGP when the data is limited. The established framework for capturing such uncertainty is Bayesian inference. 

The closest work to ours, using parametric Bayesian inference to inform the optimisation problem, is Bayesian DRO (BDRO) by \cite{shapiro2023bayesian}.
BDRO constructs discrepancy-based ambiguity sets with the KL divergence and takes an \textit{expected worst-case} approach, under the posterior distribution.
More specifically, let $U_{\epsilon}(\bP_\theta) := \{ \bQ \in \cP(\Xi) : \KL(\bQ \Vert \bP_\theta) \leq \epsilon \}$ be the ambiguity set centered on distribution $\bP_\theta$ with parameter $\epsilon \in [0, \infty)$ controlling the size of the ambiguity set.
Under the expected value of the posterior, Bayesian DRO solves:
\begin{align} \label{eq:bayesian-dro}
    \min_{x \in \bR^d}~{\bE_{\theta \sim \Pi(\theta \mid \cD)}}~{\left[\sup_{\bQ \in U_{\epsilon} (\bP_{\theta})}~{\bE_{\xi \sim \bQ}[f(x, \xi)]}\right]},
\end{align}
where $\bE_{\theta \sim \Pi(\theta \mid \cD)}[Y] := \int_\Theta~{Y(\theta)\Pi(\theta \mid \cD) \,d\theta }$ denotes the expectation of random variable $Y: \Theta \to \bR$ with respect to $\Pi(\theta \mid \cD)$.
A decision maker is often interested in protecting against and quantifying the worst-case risk, but BDRO does not correspond to a worst-case risk analysis.
Moreover, the BDRO dual problem is a two-stage stochastic problem that involves a double expectation over the posterior and likelihood.
To get a good approximation of the dual problem, a large number of samples are required, which increases the solve time of the dual problem. 

We introduce DRO with Bayesian Ambiguity Sets (DRO-BAS), an alternative optimisation objective for Bayesian decision-making under uncertainty, based on a posterior-informed ambiguity set. The resulting problem corresponds to a worst-case risk minimisation over distributions with small expected deviation from the candidate model. We go beyond ball-based ambiguity sets, which are dependent on a single nominal distribution, by \textit{allowing the shape of the ambiguity set to be informed by the posterior}.
For many exponential family models, we show that the dual formulation of DRO-BAS is an efficient single-stage stochastic program.

\section{DRO with Bayesian Ambiguity Sets}

\begin{figure}
    \centering
   \begin{subfigure}[b]{0.4\textwidth}
         \centering
\includegraphics[width=\textwidth]{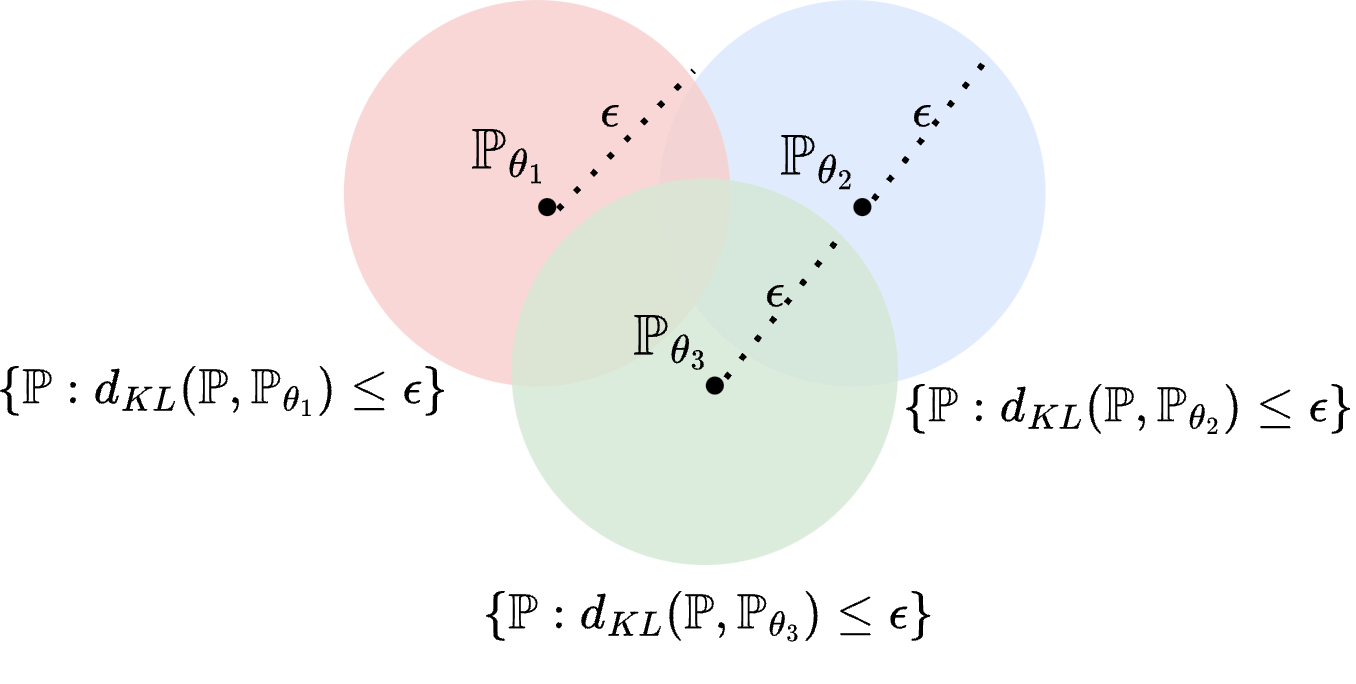}
         \caption{BDRO}
         \label{fig}
     \end{subfigure}
     \hfill
     \begin{subfigure}[b]{0.4\textwidth}
         \centering
         \includegraphics[width=0.7\textwidth]{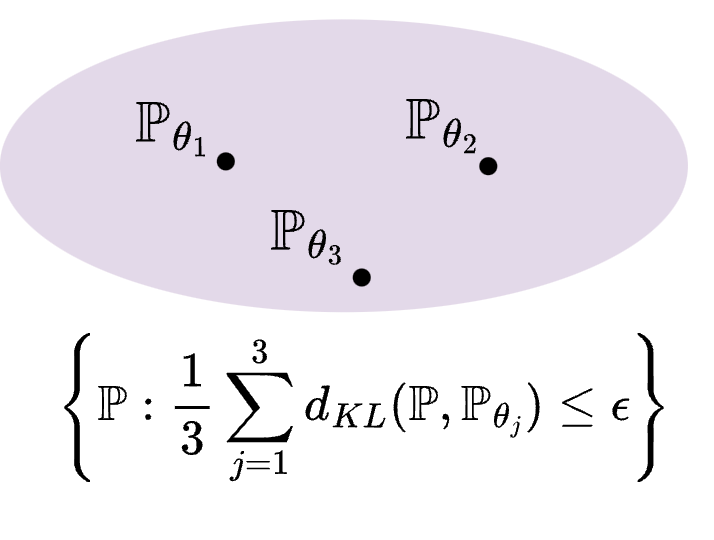}
         \vspace{-0.3cm}
         \caption{DRO-BAS}
         \label{fig}
     \end{subfigure}
    \caption{Illustration of the construction of the BDRO and DRO-BAS optimisation problems for three posterior samples $\theta_1, \theta_2, \theta_3 \simiid \Pi(\theta \mid \cD)$. BDRO seeks the decision that minimises the average worst-case risk between the three ambiguity sets shown in figure (a) whereas DRO-BAS targets the decision minimising the worst-case risk over the ambiguity set shown in (b). }
    \label{fig:bdro-vs-ours}
\end{figure}
We propose the following DRO-BAS objective:
\begin{equation}\label{eq:worst-case-exp}
    \begin{aligned}
        \min_{x \in \bR^d}~\sup_{\bQ: \bE_{\theta \sim \Pi}[D(\bQ, \bP_\theta)] \leq \epsilon}~{\bE_{\xi\sim\bQ}~\left[ f_x(\xi) \right]},
    \end{aligned}
\end{equation}
where~$\bQ \in \cP(\Xi)$ is a distribution in the ambiguity set, $f_x(\xi) := f(x, \xi)$ is the objective function, $D: \cP(\Xi) \times \cP(\Xi) \rightarrow \bR$ is a divergence, and $\epsilon \in [0, \infty)$ is a tolerance level.
The ambiguity set is informed by the posterior distribution $\Pi$ by considering all probability measures $\bQ \in \cP(\Xi)$ which are $\epsilon$-away from $\bP_\theta$ in expectation, with $\epsilon$ dictating the desired amount of risk in the decision. 

The shape of our ambiguity set is flexible and driven by the posterior distribution. This is contrary to standard ambiguity sets which correspond to a ball around a fixed nominal distribution. The DRO-BAS problem (\ref{eq:worst-case-exp}) is still a worst-case approach, keeping with DRO tradition, instead of BDRO's expected worst-case formulation (\ref{eq:bayesian-dro}), see \Cref{fig:bdro-vs-ours}. 

The Bayesian posterior $\Pi(\theta \mid \cD)$ targets the KL minimiser between the model family and $\bP^\star$ \citep{walker2013bayesian}, hence it is natural to choose $D(\bQ, \bP_\theta)$ to be the KL divergence of $\bQ$ with respect to $\bP_\theta$ denoted by $\KL(\bQ \Vert \bP_\theta)$. This means that as $n \rightarrow \infty$ the posterior collapses to $\theta_0 := \argmin_{\theta \in \Theta} \KL(\bP^\star, \bP_\theta)$ and the ambiguity set is just a KL-ball around $\bP_{\theta_0}$.
Using the KL divergence in the DRO-BAS problem in (\ref{eq:worst-case-exp}), it is straight-forward to obtain an upper bound of the worst-case risk for general models (see \Cref{app:prop-2} for a proof):
    \begin{align} \label{eq:kl-upper-bound}
        \sup_{\bQ: \bE_{\theta \sim \Pi} [\KL(Q \Vert \bP_\theta)] \leq \epsilon}~\bE_{\xi \sim \bQ}[f_x(\xi)] \leq \inf_{\gamma \geq 0}~\gamma \epsilon + \bE_{\theta \sim \Pi}\left[ \gamma \ln \bE_{\xi \sim \bP_\theta}~\left[ \exp\left( \frac{f_x(\xi)}{\gamma} \right) \right] \right].
    \end{align}

Exact closed-form solutions of DRO-BAS can be obtained for a wide range of exponential family models with conjugate priors. When the likelihood distribution is a member of the exponential family, a conjugate prior also belongs to the exponential family \citep[][Ch. 4.2]{gelman1995bayesian}.
\begin{table}
  \caption{Examples for \Cref{thm:exact-reformulation} of the parameter~$\bar{\theta}_n$ and the function~$G(\tau_n)$ for different likelihoods~$p(\xi \mid \theta)$ with conjugate posterior~$\Pi(\theta \mid \tau_n)$ and posterior hyperparameters~$\tau_n$. The normal, normal-gamma, exponential, and gamma distributions are denoted~$\distnormal$, $\distng$, $\distexp$, and $\distgamma$ respectively. See the supplementary material for the definitions of $\tau_n$ and the derivations of $\bar{\theta}_n$ and $G(\tau_n)$. \\
  } 
  \label{tab:expon-fam}
  \centering
  \begin{tabular}{llll}
    \toprule 
    $p(\xi \mid \theta)$ &  $\Pi(\theta \mid \tau_n)$  & $\bar{\theta}_n$ & $G(\tau_n)$ \\
    \midrule
    $\cN(\xi \mid \mu, \sigma^2)$ & $\cN(\mu \mid \mu_n, \sigma^2_n)$ & $\mu_n, \sigma^2$ & $\frac{\sigma^2_n}{2 \sigma^2}$\\
    $\cN(\xi \mid \mu, \lambda^{-1})$ & $\distng(\mu, \lambda \mid \mu_n, \kappa_n, \alpha_n, \beta_n)$ & $\mu_n$, $\frac{\beta_n}{\alpha_n}$ & $\frac{1}{2}\left( \frac{1}{\kappa_n} + \ln \alpha_n - \psi(\alpha_n) \right)$ \\
    $\distexp(\xi \mid \theta)$ & $\distgamma(\theta \mid \alpha_n, \beta_n)$ & $\frac{\alpha_n}{\beta_n}$ & $\ln \alpha_n - \psi(\alpha_n)$ \\
    \bottomrule
  \end{tabular}
\end{table}
In this setting, before we prove the main result, we start with an important Lemma.
\begin{lemma} \label{lem:expected-kl}
    Let $p(\xi \mid \theta)$ be an exponential family likelihood and $\pi(\theta)$, $\Pi(\theta \mid \cD)$ a conjugate prior-posterior pair, also members of the exponential family.
    Let $\tau_0, \tau_n \in T$ be hyperparameters of the prior and posterior respectively, where $T$ is the hyperparameter space.
    Let $\bar{\theta}_n \in \Theta$ depend upon $\tau_n$ and let $G: T \to \bR$ be a function of the hyperparameters. If the following identity holds:
    \begin{align} \label{eq:expected-log-condition}
        \bE_{\theta \sim \Pi}\left[ \ln p(\xi \mid \theta) \right] = \ln p(\xi \mid \bar{\theta}_n) - G(\tau_n),
    \end{align}
    then the expected KL-divergence can be written as:
    \begin{align}
        \bE_{\theta \sim \Pi}\left[  \KL(\bQ \Vert \bP_\theta) \right] =   \KL(\bQ, \bP_{\bar{\theta}_n}) + G(\tau_n).
    \end{align}
\end{lemma}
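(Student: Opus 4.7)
The plan is to expand the KL divergence as a difference between the entropy of $\bQ$ and the cross-entropy term involving $\ln p(\xi \mid \theta)$, then push the outer expectation over $\theta$ through to the likelihood factor, and finally apply the hypothesised identity \eqref{eq:expected-log-condition} to collapse the cross-entropy into a KL against $\bP_{\bar{\theta}_n}$ plus a correction $G(\tau_n)$.

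Concretely, I would first write
\begin{equation*}
\KL(\bQ \Vert \bP_\theta) = \bE_{\xi \sim \bQ}[\ln q(\xi)] - \bE_{\xi \sim \bQ}[\ln p(\xi \mid \theta)],
\end{equation*}
where $q$ denotes the density of $\bQ$. Taking $\bE_{\theta \sim \Pi}$ on both sides, the entropy term $\bE_{\xi \sim \bQ}[\ln q(\xi)]$ does not depend on $\theta$, so it passes through unchanged, and the remaining step is to interchange the order of integration in the cross-entropy term, so that
\begin{equation*}
\bE_{\theta \sim \Pi}\bE_{\xi \sim \bQ}[\ln p(\xi \mid \theta)] = \bE_{\xi \sim \bQ}\bE_{\theta \sim \Pi}[\ln p(\xi \mid \theta)].
\end{equation*}

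Next I would substitute the hypothesised identity \eqref{eq:expected-log-condition}, which replaces the inner expectation by $\ln p(\xi \mid \bar{\theta}_n) - G(\tau_n)$. Pulling the constant $G(\tau_n)$ out of $\bE_{\xi \sim \bQ}$ and regrouping gives
\begin{equation*}
\bE_{\theta \sim \Pi}[\KL(\bQ \Vert \bP_\theta)] = \bE_{\xi \sim \bQ}[\ln q(\xi)] - \bE_{\xi \sim \bQ}[\ln p(\xi \mid \bar{\theta}_n)] + G(\tau_n),
\end{equation*}
and recognising the first two terms as $\KL(\bQ \Vert \bP_{\bar{\theta}_n})$ finishes the argument.

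The only subtle point is the Fubini swap, which requires either integrability of $\ln p(\xi \mid \theta)$ under the product measure $\Pi \otimes \bQ$ or a sign argument; given that the statement implicitly assumes $\bE_{\theta \sim \Pi}[\ln p(\xi \mid \theta)]$ is well-defined and finite via \eqref{eq:expected-log-condition}, this is a mild technical assumption rather than a genuine obstacle. Everything else is purely algebraic, so I do not expect serious difficulty beyond stating the required integrability conditions.
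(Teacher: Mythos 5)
Your proposal is correct and follows essentially the same route as the paper's proof: decompose the KL divergence into entropy minus cross-entropy, exchange the order of integration over $\theta$ and $\xi$, apply the identity \eqref{eq:expected-log-condition}, and regroup to recover $\KL(\bQ \Vert \bP_{\bar{\theta}_n}) + G(\tau_n)$. Your explicit remark on the Fubini/integrability condition is a point the paper passes over silently under the label ``linearity of expectation,'' so it is a welcome (if minor) addition rather than a divergence in method.
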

The condition in (\ref{eq:expected-log-condition}) is a natural property of many exponential family models, some of which are showcased in Table \ref{tab:expon-fam}. Future work aims to prove this for all exponential family models.
It is straightforward to establish the minimum tolerance level $\epsilon_{\min}$ required to obtain a non-empty ambiguity set. Since the KL divergence is non-negative, under the condition of Lemma \ref{lem:expected-kl}, for any $\bQ \in \cP(\Xi)$:
\begin{align} \label{eq:min-epsilon}
     \bE_{\theta \sim \Pi} [\KL(\bQ \Vert \bP_\theta)] = \KL(\bQ, \bP_{\bar{\theta}_n}) + G(\tau_n) \geq G(\tau_n) := \epsilon_{\min}(n).
\end{align}
We are now ready to prove our main result.
\begin{theorem} \label{thm:exact-reformulation}
    Suppose the conditions of Lemma \ref{lem:expected-kl} hold and $\epsilon \geq \epsilon_{\min}(n)$ as in (\ref{eq:min-epsilon}).
    Let $\tau_n \in T$, $\bar{\theta}_n \in \Theta$, and $G: T \to \bR$.
    Then
    \begin{align}\label{eq:conjecture-exponential-families}
        \sup_{\bQ: \bE_{\theta \sim \Pi} [\KL(\bQ \Vert \bP_\theta)] \leq \epsilon}~\bE_{\xi \sim \bQ}[f_x(\xi)] = \inf_{\gamma \geq 0}~\gamma (\epsilon - G(\tau_n)) +  \gamma \ln \bE_{\xi \sim p(\xi \mid \bar{\theta}_n)}~\left[ \exp\left( \frac{f_x(\xi)}{\lambda} \right) \right].
    \end{align}
\end{theorem}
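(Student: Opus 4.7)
The plan is to use \Cref{lem:expected-kl} to collapse the posterior-averaged ambiguity constraint into a standard KL-ball constraint around the single nominal distribution $\bP_{\bar\theta_n}$, and then invoke the classical dual reformulation for KL-constrained DRO to obtain the right-hand side.

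First I would apply \Cref{lem:expected-kl} directly to rewrite the feasibility condition. For any $\bQ \in \cP(\Xi)$ we have $\bE_{\theta \sim \Pi}[\KL(\bQ \Vert \bP_\theta)] = \KL(\bQ \Vert \bP_{\bar\theta_n}) + G(\tau_n)$, so the constraint $\bE_{\theta \sim \Pi}[\KL(\bQ \Vert \bP_\theta)] \leq \epsilon$ is equivalent to $\KL(\bQ \Vert \bP_{\bar\theta_n}) \leq \epsilon - G(\tau_n)$. By the hypothesis $\epsilon \geq \epsilon_{\min}(n) = G(\tau_n)$, the reduced radius $\eta := \epsilon - G(\tau_n)$ is non-negative, so the reformulated KL ball is non-empty (it contains $\bP_{\bar\theta_n}$). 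The inner problem therefore becomes
\begin{equation*}
    \sup_{\bQ \in \cP(\Xi):\, \KL(\bQ \Vert \bP_{\bar\theta_n}) \leq \eta}~\bE_{\xi \sim \bQ}[f_x(\xi)].
\end{equation*}

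Next I would invoke the standard strong-duality result for KL-constrained worst-case expectations \citep[see e.g.][]{hu2013kullback}, which is a consequence of the Donsker--Varadhan variational formula applied to the Lagrangian dual of the KL ball. Writing the Lagrangian with multiplier $\gamma \geq 0$ for the KL constraint (the probability-normalisation constraint is absorbed into the restriction $\bQ \in \cP(\Xi)$) and exchanging $\sup$ and $\inf$ (which is justified by convexity of the KL divergence in its first argument, linearity of the objective, and the Slater-type condition furnished by $\bP_{\bar\theta_n}$ being a strict interior point whenever $\eta > 0$, with the boundary case $\eta = 0$ handled directly by $\bQ = \bP_{\bar\theta_n}$), one obtains
\begin{equation*}
    \sup_{\bQ:\, \KL(\bQ \Vert \bP_{\bar\theta_n}) \leq \eta}~\bE_{\xi \sim \bQ}[f_x(\xi)] = \inf_{\gamma \geq 0}~\gamma \eta + \gamma \ln \bE_{\xi \sim \bP_{\bar\theta_n}}\!\left[ \exp\!\left( \frac{f_x(\xi)}{\gamma} \right) \right],
\end{equation*}
with the convention that the $\gamma = 0$ value is the essential supremum of $f_x$ under $\bP_{\bar\theta_n}$.

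Finally I would substitute $\eta = \epsilon - G(\tau_n)$ to recover exactly \eqref{eq:conjecture-exponential-families}. The only non-routine step is the application of \Cref{lem:expected-kl}; once the posterior-averaged KL divergence is reduced to a single-nominal KL divergence, the rest is a direct appeal to an off-the-shelf duality result, so I do not anticipate a substantive obstacle. The main item warranting care is verifying that the constant shift $G(\tau_n)$ preserves the feasibility of the dualised problem and that the inequality $\epsilon \geq \epsilon_{\min}(n)$ is precisely what is needed for the reduced KL radius to be non-negative, so that strong duality applies and the infimum in $\gamma$ is attained (or approached) in $[0, \infty)$.
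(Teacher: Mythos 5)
Your proposal is correct and follows essentially the same route as the paper: both use \Cref{lem:expected-kl} to reduce the posterior-averaged KL constraint to a single KL ball of radius $\epsilon - G(\tau_n)$ around $\bP_{\bar{\theta}_n}$, and both close the argument with the standard KL-DRO Lagrangian duality plus a Slater condition furnished by the strict feasibility of $\bP_{\bar{\theta}_n}$ when $\epsilon > G(\tau_n)$. The only cosmetic difference is that you rewrite the constraint before dualising whereas the paper dualises first and substitutes the lemma inside the dual; your explicit treatment of the boundary case $\epsilon = \epsilon_{\min}(n)$ is in fact slightly more careful than the paper's.
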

To guarantee that the DRO-BAS objective upper bounds the expected risk under the DGP, the decision-maker aims to choose $\epsilon$ large enough so that $\bP^\star$ is contained in the ambiguity set. The condition in (\ref{eq:expected-log-condition}) yields a closed-form expression for the optimal radius $\epsilon^\star$ by noting that:
\begin{align} \label{eq:optimal-epsilon}
    \epsilon^\star = \bE_{\theta \sim \Pi} [\KL(\bP^\star \Vert \bP_\theta)] = \KL(\bP^\star, \bP_{\bar{\theta}_n}) + G(\tau_n).
\end{align}
If the model is well-specified, and hence $\bP^\star$ and $\bP_{\bar{\theta}_n}$ belong to the same exponential family, it is straightforward to obtain $\epsilon^\star$ based on the prior, posterior and true parameter values. 
We give examples in \Cref{app:special-cases}.
In practice, since the true parameter values are unknown, we can approximate $\epsilon^{\star}$ using the observed samples. It follows that for any $\epsilon \geq \epsilon^{\star} \geq \epsilon_{\min}(n)$:
\begin{align*}
    \bE_{\xi \sim \bP^\star}[f(x, \xi)] \leq  \sup_{\bQ: \bE_{\theta \sim \Pi} [\KL(Q \Vert \bP_\theta)] \leq \epsilon}~\bE_{\xi \sim \bQ}[f_x(\xi)]. 
\end{align*}

\section{The Newsvendor Problem} \label{sec:newsvendor-problem}

\paragraph{Experiment setup.} We evaluate DRO-BAS against the BDRO framework on a univariate Newsvendor problem with a well-specified univariate Gaussian likelihood with unknown mean and variance (\Cref{sec:newsvendor-truncated-normal} showcases a misspecified setting).
The goal is to choose an inventory level~$0 \leq x \leq 50$ of a perishable product with unknown customer demand~$\xi \in \bR$ that minimises the cost function $f(x, \xi) = h \max(0, x - \xi) + b \max(0, \xi - x)$, where $h$ and $b$ are the holding cost and backorder cost per unit of the product respectively.
We let $\bP^\star$ be a univariate Gaussian~$\cN(\mu_\star, \sigma^2_\star)$ with $\mu_\star = 25$ and $\sigma^2_\star = 100$.
For random seed~$j = 1,\ldots,200$, the training dataset $\cD_n^{(j)}$ contains $n = 20$ observations and the test dataset $\cD_m^{(j)}$ contains $m = 50$ observations.
The conjugate prior and posterior are normal-gamma distributions (\Cref{sec:gaussian-unknown-mean-variance}).
$N$ is the total number of samples from each model.
For each seed~$j$, we run DRO-BAS and BDRO with $N = 25, 100, 900$ and across 21 different values of $\epsilon$ ranging from 0.05 to 3. For DRO-BAS, $N$ is the number of samples from $p(\xi \mid \bar{\theta}_n)$ and for BDRO, $N = N_\theta \times N_\xi$ where $N_\theta$ is the number of posterior samples and $N_\xi$ likelihood samples due to the double expectation present; we set $N_\theta$ = $N_\xi$ to compare models on an equal $N$ total samples regime. For a given~$\epsilon$, we calculate the out-of-sample mean~$m(\epsilon)$ and variance~$v(\epsilon)$ of the objective function~$f(x^{(j)}_\epsilon, \hat{\xi}_i)$ over all~$\hat{\xi}_i \in \cD^{(j)}_m$ and over all seeds~$j=1,\ldots,200$, where $x^{(j)}_\epsilon$ is the optimal solution on training dataset~$\cD^{(j)}_n$ (see \Cref{sec:newsvendor-details}).

\begin{figure}[t]
    \centering
    \includegraphics[width=\linewidth]{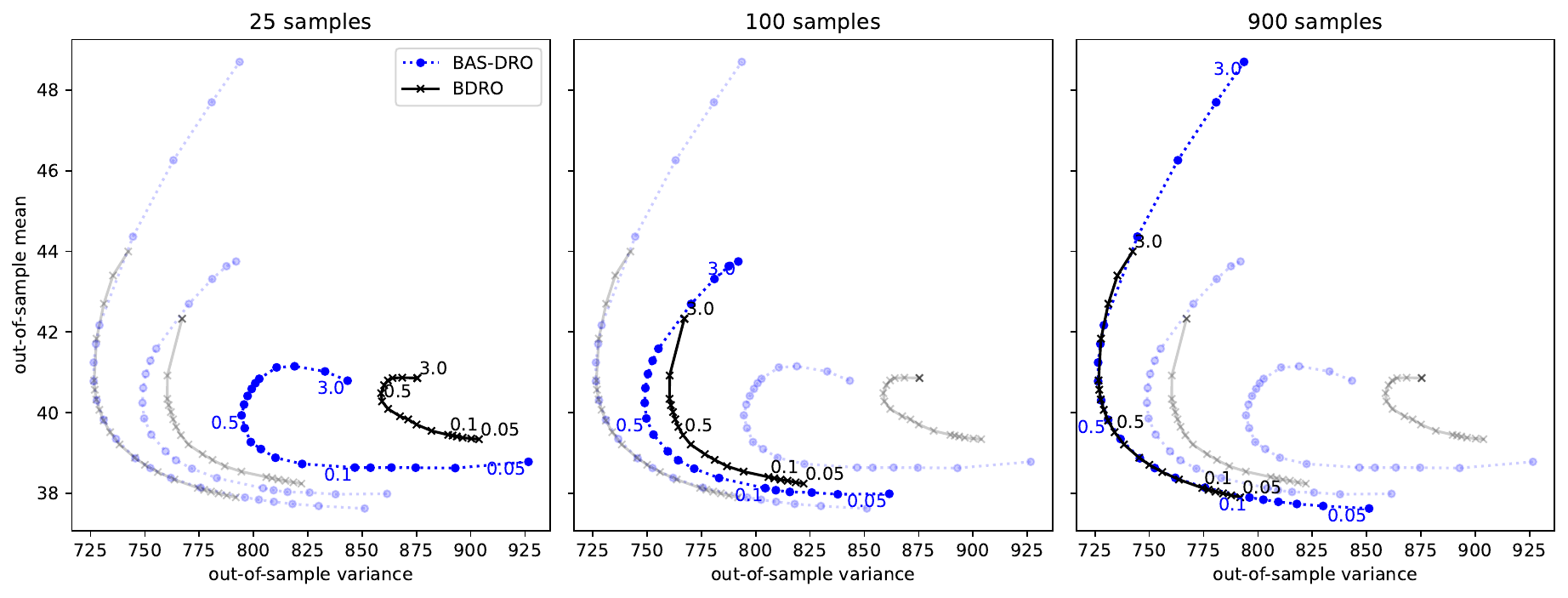}
    \caption{The out-of-sample mean-variance tradeoff (in bold) while varying $\epsilon$ for DRO-BAS and BDRO when the total number of samples from the model is 25 (left), 100 (middle), and 900 (right).}
    \label{fig:normal_mean_variance}
\end{figure}

\paragraph{Analysis.} \Cref{fig:normal_mean_variance} shows that, for small sample size~$N = 25, 100$, our framework \textit{dominates} BDRO in the sense that DRO-BAS forms a Pareto front for the out-of-sample mean-variance tradeoff of the objective function~$f$.
That is, for any $\epsilon_1$, let $m_\text{BDRO}(\epsilon_1)$ and $v_\text{BDRO}(\epsilon_1)$ be the out-of-sample mean and variance respectively of BDRO:
then there exists $\epsilon_2$ with out-of-sample mean~$m_\text{BAS}(\epsilon_2)$ and variance $v_\text{BAS}(\epsilon_2)$ of BAS-DRO such that $m_\text{BAS}(\epsilon_2) < m_\text{BDRO}(\epsilon_1)$ and $v_\text{BAS}(\epsilon_2) < v_\text{BDRO}(\epsilon_1)$.
When~$N=900$, \Cref{fig:normal_mean_variance} shows DRO-BAS and BDRO lie roughly on the same Pareto front.
To summarise, BDRO requires more samples~$N$ than DRO-BAS for good out-of-sample performance, likely because BDRO must evaluate a double expectation over the posterior and likelihood, whilst DRO-BAS only samples from~$p(\xi \mid \bar{\theta}_n)$.
For fixed~$N$, the solve times for DRO-BAS and BDRO are broadly comparable (see~\Cref{sec:newsvendor-details}).

\section{Discussion}
We proposed a novel approach to Bayesian decision-making under uncertainty through a DRO objective based on posterior-informed Bayesian ambiguity sets. The resulting optimisation problem is a single-stage stochastic program with closed-form formulation for a variety of exponential-family models. The suggested methodology has good out-of-sample performance, as showcased in \Cref{fig:normal_mean_variance}. Future work aims to extend DRO-BAS to a general formulation for exponential family models, including higher-dimensional problems, in which we expect to see further advantages of our method due to the nature of the Bayesian Ambiguity Set. 

\begin{ack}
CD acknowledges support from EPSRC grant [EP/T51794X/1] as part of the Warwick CDT in Mathematics and Statistics. PO and TD acknowledge support from a UKRI Turing AI acceleration Fellowship [EP/V02678X/1] and a Turing Impact Award from the Alan Turing Institute. For the purpose of open access, the authors have applied a Creative Commons Attribution (CC-BY) license to any Author Accepted Manuscript version arising from this submission.
\end{ack}

\bibliography{refs}

\newpage
\appendix
{
  \begin{center}
    \LARGE
    \vspace{5mm}
    \textbf{Supplementary Material}
    \vspace{5mm}
  \end{center}
}

The Supplementary Material is organised as follows:  \Cref{app:special-cases} provides details for all the exponential family models discussed in \Cref{tab:expon-fam}, while \Cref{app:proofs} contains the proofs of all mathematical results appearing in the main text. \Cref{sec:newsvendor-details} provides additional experimental details for the Newsvendor problem in \Cref{sec:newsvendor-problem}. Finally, in \Cref{sec:newsvendor-truncated-normal} we present experimental results for the newsvendor problem example with a misspecified model. 

\section{Special cases} \label{app:special-cases}
We derive the values of $G(\tau_n)$ and $\bar{\theta}_n$ for different likelihoods and conjugate prior/posterior in \Cref{tab:expon-fam}.
Each subsection contains a corollary with the result in \Cref{tab:expon-fam}.

\subsection{Gaussian model with unknown mean and known variance}
Let the random variable~$\xi$ be univariate and have continuous support.
We assume the variance~$\sigma^2$ of~$\xi$ is known.
We estimate the mean of a univariate Gaussian distribution with known variance~$\sigma^2$.
The example can be found in~\cite[Section~2.3.6]{bishop2006pattern}.
We define our parameter~$\theta$ to be the unknown mean~$\mu$ and we place a Gaussian prior~$\pi(\mu)$ over it.
\begin{definition}[Gaussian with unknown mean and known variance]\label{def:simple-gaussian-model}
The likelihood is $p(\xi \mid \mu) = \cN(\mu, \sigma^2)$, the prior over $\mu$ is 
$\pi(\mu) = \cN(\mu_0, \sigma_0^2)$, and the conjugate posterior is $\Pi(\mu \mid \cD) = \cN(\mu \mid \mu_n , \sigma_n^2)$, 
where
\begin{align*}
    &\mu_n := \frac{\sigma^2}{n\sigma^2_0 + \sigma^2}\mu_0 + \frac{n \sigma_0^2}{n \sigma_0^2 + \sigma^2} \hat{\mu}  && \frac{1}{\sigma^2_n} := \frac{1}{\sigma_0^2} + \frac{n}{\sigma^2} &&\hat{\mu} := \frac{1}{n}\sum_{i=1}^n~\xi_i.
\end{align*}
\end{definition}

\begin{lemma} \label{lem:log-gaussian-known-variance}
    Let~$\mu_n \in \bR$ and $\sigma, \sigma_n \in \bR_+$. Then
    $$\bE_{\mu \sim \cN(\mu_n,\sigma_n^2)}\left[ \log \cN(\mu, \sigma^2) \right] = \log \cN(\mu_n, \sigma^2) - \frac{\sigma_n^2}{2\sigma^2}.$$
\end{lemma}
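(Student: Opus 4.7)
The plan is to expand the log-density as an explicit function of $\mu$ (with $\xi$ treated as a fixed point of evaluation) and take expectations term by term. Writing
\[
\log \cN(\xi \mid \mu, \sigma^2) = -\tfrac{1}{2}\log(2\pi \sigma^2) - \frac{(\xi - \mu)^2}{2\sigma^2},
\]
the only $\mu$-dependent piece is the squared term $-(\xi - \mu)^2/(2\sigma^2)$, so the whole argument reduces to computing a single second moment under $\mu \sim \cN(\mu_n, \sigma_n^2)$.

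Next I would apply the bias-variance identity
\[
\bE_{\mu \sim \cN(\mu_n,\sigma_n^2)}\bigl[(\xi - \mu)^2\bigr] = (\xi - \bE[\mu])^2 + \operatorname{Var}(\mu) = (\xi - \mu_n)^2 + \sigma_n^2,
\]
which is immediate since $\xi$ is a deterministic constant from the perspective of this expectation. Substituting this back yields
\[
\bE_{\mu \sim \cN(\mu_n,\sigma_n^2)}\!\left[ \log \cN(\xi \mid \mu, \sigma^2) \right] = -\tfrac{1}{2}\log(2\pi \sigma^2) - \frac{(\xi - \mu_n)^2}{2\sigma^2} - \frac{\sigma_n^2}{2\sigma^2}.
\]

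Finally I would recognise the first two terms on the right-hand side as precisely $\log \cN(\xi \mid \mu_n, \sigma^2)$, giving the claimed identity with correction term $G(\tau_n) = \sigma_n^2 / (2\sigma^2)$, matching the entry in \Cref{tab:expon-fam}. There is no real obstacle here: the statement is a direct consequence of linearity of expectation together with the second-moment formula for a Gaussian, and the normalising constant $-\tfrac{1}{2}\log(2\pi\sigma^2)$ passes through the expectation unchanged because it depends only on the (known) $\sigma^2$. The only thing to be careful about is the implicit convention that $\log \cN(\mu,\sigma^2)$ denotes the log-density evaluated at the generic point $\xi$, so that the identity is to be read pointwise in $\xi$ — which is exactly what \Cref{lem:expected-kl} requires when it is subsequently integrated against $\bQ$.
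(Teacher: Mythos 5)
Your argument is correct: the bias--variance decomposition $\bE[(\xi-\mu)^2] = (\xi-\mu_n)^2 + \sigma_n^2$ is exactly the right computation, the normalising constant does pass through unchanged, and your reading of $\log\cN(\mu,\sigma^2)$ as the log-density at a generic fixed point $\xi$ is the intended convention. The paper, however, does not carry out this computation at all --- its entire proof is a one-line citation, stating that the result is a special case of Lemma~H.10 of \cite{stratos2023gaussian}, a more general identity for Gaussian expectations of Gaussian log-densities. Your route is therefore genuinely different in presentation: it is fully self-contained and elementary, at the cost of a few lines of algebra, whereas the paper's citation buys brevity and situates the lemma as an instance of a known general formula. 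If anything, your version is the more useful one for a reader who wants to verify \Cref{tab:expon-fam} without chasing the reference, and it makes transparent exactly where the correction term $G(\tau_n) = \sigma_n^2/(2\sigma^2)$ comes from (the variance of $\mu$ feeding into the quadratic term). No gap to report.
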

\begin{proof}
    The result is a special case of \cite[Lemma~H.10]{stratos2023gaussian}.
\end{proof}

\begin{corollary} \label{cor:gauss-known-var}
    When the likelihood is a Gaussian distribution with unknown mean and known variance~$\sigma^2$ and the prior and posterior are Gaussian distributions (see \Cref{def:simple-gaussian-model}), then \Cref{thm:exact-reformulation} holds with $\bar{\theta}_n = \mu_n$ and $G(\tau_n) = \frac{\sigma_n^2}{2\sigma^2}$.
\end{corollary}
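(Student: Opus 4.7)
The plan is to verify the hypothesis of Lemma \ref{lem:expected-kl} with the specific choices $\bar{\theta}_n = \mu_n$ and $G(\tau_n) = \sigma_n^2/(2\sigma^2)$, and then invoke Theorem \ref{thm:exact-reformulation} directly. Since the prior--posterior pair in \Cref{def:simple-gaussian-model} is conjugate within the exponential family, the only real work is checking the identity \eqref{eq:expected-log-condition}.

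First, I would observe that \Cref{lem:log-gaussian-known-variance} essentially delivers \eqref{eq:expected-log-condition} for free. The Gaussian log-density is a quadratic polynomial in its mean argument, so exchanging the roles of the evaluation point $\xi$ and the mean $\mu$ in $\ln \cN(\xi \mid \mu, \sigma^2)$ costs nothing, and taking expectation under $\mu \sim \cN(\mu_n, \sigma_n^2)$ produces exactly the stated log-density at $\mu_n$ plus a deterministic correction equal to $-\sigma_n^2/(2\sigma^2)$ coming from the variance of $\mu$. Written in the notation of \Cref{lem:expected-kl}, this is
\begin{align*}
    \bE_{\mu \sim \Pi}\left[\ln p(\xi \mid \mu)\right] = \ln p(\xi \mid \mu_n) - \frac{\sigma_n^2}{2\sigma^2},
\end{align*}
which is precisely \eqref{eq:expected-log-condition} with $\bar{\theta}_n = \mu_n$ and $G(\tau_n) = \sigma_n^2/(2\sigma^2)$.

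Next, I would apply \Cref{lem:expected-kl} to conclude that for every $\bQ \in \cP(\Xi)$,
\begin{align*}
    \bE_{\mu \sim \Pi}\left[\KL(\bQ \Vert \bP_\mu)\right] = \KL\bigl(\bQ, \cN(\mu_n, \sigma^2)\bigr) + \frac{\sigma_n^2}{2\sigma^2}.
\end{align*}
The hypotheses of \Cref{thm:exact-reformulation} are then in force, and substituting these particular values of $\bar{\theta}_n$ and $G(\tau_n)$ into \eqref{eq:conjecture-exponential-families} gives the claimed reformulation. The only potential friction is the mild abuse of notation in \Cref{lem:log-gaussian-known-variance} (which writes the density with its mean as the sole argument), but the symmetry $\cN(\xi \mid \mu, \sigma^2) = \cN(\mu \mid \xi, \sigma^2)$ removes this issue immediately, so no substantive obstacle remains.
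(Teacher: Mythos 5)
Your proposal is correct and follows essentially the same route as the paper: the paper's proof also just cites \Cref{lem:log-gaussian-known-variance} to verify condition \eqref{eq:expected-log-condition} and then invokes \Cref{thm:exact-reformulation}. Your additional remark on the symmetry $\cN(\xi \mid \mu, \sigma^2) = \cN(\mu \mid \xi, \sigma^2)$ is a harmless clarification of the notation in that lemma.
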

\begin{proof}
        \Cref{lem:log-gaussian-known-variance} shows that the condition \eqref{eq:expected-log-condition} in \Cref{lem:expected-kl} holds, thus \Cref{thm:exact-reformulation} follows.
\end{proof}

\paragraph{Tolerance level $\epsilon$} 
In the well-specified case, where we assume that $\bP^\star := \bP_\theta^\star$ for some $\theta^\star \in \Theta$, it is easy to obtain the required size of the ambiguity set exactly. Let $\theta^\star := \mu^\star$ and $\bP^\star := N(\mu^\star, \sigma^2)$. By \Cref{cor:gauss-known-var} it follows that:
\begin{align}
    \bE_{\theta \sim N(\mu_n, \sigma^2_n)} \left[\KL(\bP^\star, \bP_\theta) \right] = \frac{(\mu^\star - \mu_n)^2 + \sigma^2_n}{2 \sigma^2}.
\end{align}
So for a fixed finite sample $\xi_{1:n} \sim \bP^\star$, if $\epsilon \geq \epsilon^\star := \frac{(\mu^\star - \mu_n)^2 + \sigma^2_n}{2 \sigma^2}$, it follows that DRO-BAS upper-bounds the target optimisation objective: 
\begin{align*}
    \bE_{\xi \sim \bP^\star} [f_x(\xi)] \leq \sup_{\bQ: \bE_{\theta \sim \Pi(\cdot \mid \xi_{1:n})} [\KL(Q~\Vert~\bP_\theta)] \leq \epsilon}~\bE_{\xi \sim \bQ}[f_x(\xi)].
\end{align*} 
In practice, since $\mu^\star$ is unknown, $\epsilon^\star$ can be approximated by using the sample mean. 

\subsection{Gaussian Model with Unknown Mean and Variance} \label{sec:gaussian-unknown-mean-variance}
In this section, we consider a Bayesian model that estimates the unknown mean and variance of a uni-variate Gaussian distribution.
We define our model in \Cref{def:unknown-gaussian}, then prove some preliminary results for the normal-gamma distribution, before proving our main result in \Cref{lem:expected-normal-gamma-log}.

\begin{definition}[Unknown mean and variance of a Gaussian]\label{def:unknown-gaussian}
Following \cite[Chapter~3.4.3]{murphy2023probabilistic}, we place a normal-gamma prior over the mean~$\mu$ and precision~$\lambda = \sigma^{-2}$.
The normal-gamma prior is the conjugate to a Gaussian likelihood and results in a normal-gamma posterior distribution. 
\begin{align*}
    & \text{Likelihood:} && p(\xi \mid \mu, \lambda) = \cN(\xi \mid \mu, \lambda^{-1}) \\
    & \text{Prior:} && \pi(\mu, \lambda) = NG(\mu, \lambda \mid \mu_0, \kappa_0, \alpha_0, \beta_0 ) = \cN(\mu \mid \mu_0, (\kappa_0 \lambda)^{-1}) \cdot Ga(\lambda \mid \alpha_0, \beta_0) \\
    & \text{Posterior:} && \pi(\mu, \lambda \mid \cD) = NG(\mu, \lambda \mid \mu_n, \kappa_n, \alpha_n, \beta_n) = \cN(\mu \mid \mu_n , (\lambda \kappa_n)^{-1} ) \cdot Ga(\lambda \mid \alpha_n, \beta_n)
\end{align*}
where
\begin{align*}
    &\mu_n := \frac{\kappa_0 \mu_0 + n \bar{\xi}_n}{n + \kappa_0}, \hspace{2em} \kappa_n := \kappa_0 + n, \hspace{2em} \alpha_n := \alpha_0 + \frac{n}{2}, \hspace{2em} \bar{\xi}_n = \frac{1}{n} \sum_{i-1}^n~\xi_i,\\
    & \beta_n := \beta_0 + \frac{1}{2}\sum_{i=1}^n~(\xi_i - \bar{\xi}_n)^2 + \frac{\kappa_0 n (\bar{\xi}_n - \mu_0)^2}{2(\kappa_0 + n)}.
\end{align*}
\end{definition}

In \Cref{lem:expected-normal-gamma-log}, we derive condition \Cref{eq:expected-log-condition} for a Gaussian model with unknown mean and unknown variance.
Before proceeding, we need to define the gamma and digamma functions and recall the moments of the normal-gamma distribution.

\begin{definition} \label{def:gamma-digamma}
    The gamma function~$\Gamma: \bN \to \bR$ and digamma function~$\psi: \bN \to \bR$ are
    \begin{align*}
        & \Gamma(z) := (z-1)! && \psi(z) := \frac{\d{}}{\d{z}}\ln \Gamma(z). 
    \end{align*}
\end{definition}

\begin{lemma} \label{lem:normal-gamma-moments}
    Let $NG(\mu, \lambda \mid \mu_n, \kappa_n, \alpha_n, \beta_n)$ be a normal-gamma distribution with parameters~$\mu_n \in \bR$ and $\kappa_n, \alpha_n, \beta_n \in \bR_+$.
    The moments of the normal-gamma distribution are
    \begin{align*}
        \bE_{NG}[\ln \lambda] = \psi(\alpha_n) - \ln \beta_n, && \bE_{NG}[\lambda] = \frac{\alpha_n}{\beta_n}, && \bE_{NG}[\lambda \mu] = \mu_n \frac{\alpha_n}{\beta_n}, && \bE_{NG}[\lambda \mu^2] = \frac{1}{\kappa_n} + \mu_n^2\frac{\alpha_n}{\beta_n}.
    \end{align*}
    where $\psi: \bN \to \bR$ is the digamma function from \Cref{def:gamma-digamma}.
\end{lemma}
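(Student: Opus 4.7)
The plan is to exploit the conditional factorisation of the normal-gamma density. If $(\mu, \lambda) \sim NG(\mu_n, \kappa_n, \alpha_n, \beta_n)$ then, by \Cref{def:unknown-gaussian}, the marginal is $\lambda \sim Ga(\alpha_n, \beta_n)$ and the conditional is $\mu \mid \lambda \sim \cN(\mu_n, (\lambda \kappa_n)^{-1})$. Writing every expectation as an outer integral in $\lambda$ over an inner integral in $\mu$, each of the four identities then reduces to the tower property combined with a well-known moment of either the gamma or the Gaussian component.

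I would first dispose of the two moments depending only on~$\lambda$. The identity $\bE_{NG}[\lambda] = \alpha_n/\beta_n$ is the standard mean of $Ga(\alpha_n, \beta_n)$. For the log-moment, I would differentiate the log-normaliser $\ln \Gamma(\alpha_n) - \alpha_n \ln \beta_n$ with respect to the shape parameter~$\alpha_n$ under the integral sign in the gamma density; this produces $\bE_{NG}[\ln \lambda]$ on one side and $\psi(\alpha_n) - \ln \beta_n$ on the other, using the definition $\psi(\alpha_n) = \Gamma'(\alpha_n)/\Gamma(\alpha_n)$ from \Cref{def:gamma-digamma}.

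For the two mixed moments, I would condition on~$\lambda$ and apply the tower property. Since $\bE[\mu \mid \lambda] = \mu_n$, the first mixed moment is $\bE_{NG}[\lambda \mu] = \bE_\lambda\bigl[\lambda \bE[\mu \mid \lambda]\bigr] = \mu_n \bE_{NG}[\lambda] = \mu_n \alpha_n/\beta_n$. For the second, the conditional second moment is $\bE[\mu^2 \mid \lambda] = \mathrm{Var}(\mu \mid \lambda) + \bE[\mu \mid \lambda]^2 = (\lambda \kappa_n)^{-1} + \mu_n^2$, which yields $\bE_{NG}[\lambda \mu^2] = \bE_\lambda\bigl[\lambda \bigl((\lambda \kappa_n)^{-1} + \mu_n^2\bigr)\bigr] = 1/\kappa_n + \mu_n^2 \alpha_n/\beta_n$. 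The neat $1/\kappa_n$ term is precisely the cancellation of $\lambda$ against~$\lambda^{-1}$ inside the outer expectation.

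No step presents a real obstacle: the lemma is a direct combination of three standard facts, namely the mean and log-mean of a gamma distribution and the first two moments of a Gaussian. The only point that deserves care is the log-moment, where one must justify interchanging differentiation and integration — routine for the gamma family because both the integrand and its $\alpha_n$-derivative have exponential tails.
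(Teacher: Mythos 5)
Your proof is correct. The paper states \Cref{lem:normal-gamma-moments} without proof, treating these as standard moments of the normal-gamma distribution, so there is no argument in the paper to compare against; your derivation via the factorisation $\lambda \sim Ga(\alpha_n, \beta_n)$, $\mu \mid \lambda \sim \cN(\mu_n, (\lambda\kappa_n)^{-1})$ and the tower property is the standard route and every step checks out, including the differentiation under the integral sign for $\bE[\ln\lambda]$ and the cancellation giving the $1/\kappa_n$ term.
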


\begin{lemma} \label{lem:expected-normal-gamma-log}
    Let~$\mu_n \in \bR$ and $\kappa_n, \alpha_n, \beta_n \in \bR_+$. Then
    $$\bE_{NG(\mu, \lambda \mid \mu_n, \kappa_n, \alpha_n, \beta_n)}\left[ \ln \cN(\xi \mid \mu, \lambda^{-1} ) \right] = \ln \cN\left( \xi\mid\mu_n, \frac{\beta_n}{\alpha_n}\right) - \frac{1}{2} \left( \frac{1}{\kappa_n} +\ln \alpha_n - \psi(\alpha_n) \right)$$
    where $\psi: \bN \to \bR$ is the digamma function from \Cref{def:gamma-digamma}.
\end{lemma}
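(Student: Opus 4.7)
The plan is to compute the expectation directly by expanding the log-density of the Gaussian and applying linearity together with the four normal-gamma moments supplied in \Cref{lem:normal-gamma-moments}.

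First I would write
\begin{align*}
\ln \cN(\xi \mid \mu, \lambda^{-1}) = \tfrac{1}{2}\ln \lambda - \tfrac{1}{2}\ln(2\pi) - \tfrac{\lambda}{2}(\xi - \mu)^2,
\end{align*}
then expand the quadratic as $\lambda(\xi-\mu)^2 = \lambda \xi^2 - 2\xi \lambda\mu + \lambda\mu^2$. Taking the expectation under $NG(\mu,\lambda\mid\mu_n,\kappa_n,\alpha_n,\beta_n)$ and substituting the four moments from \Cref{lem:normal-gamma-moments} gives
\begin{align*}
\bE_{NG}[\lambda(\xi-\mu)^2] = \tfrac{\alpha_n}{\beta_n}\xi^2 - 2\xi \mu_n \tfrac{\alpha_n}{\beta_n} + \tfrac{1}{\kappa_n} + \mu_n^2 \tfrac{\alpha_n}{\beta_n} = \tfrac{\alpha_n}{\beta_n}(\xi - \mu_n)^2 + \tfrac{1}{\kappa_n},
\end{align*}
and $\bE_{NG}[\ln \lambda] = \psi(\alpha_n) - \ln \beta_n$. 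Combining these yields
\begin{align*}
\bE_{NG}\!\left[\ln \cN(\xi\mid\mu,\lambda^{-1})\right] = \tfrac{1}{2}\psi(\alpha_n) - \tfrac{1}{2}\ln\beta_n - \tfrac{1}{2}\ln(2\pi) - \tfrac{1}{2}\tfrac{\alpha_n}{\beta_n}(\xi-\mu_n)^2 - \tfrac{1}{2\kappa_n}.
\end{align*}

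Next I would compare this expression to $\ln \cN(\xi \mid \mu_n, \beta_n/\alpha_n) = -\tfrac{1}{2}\ln(2\pi) - \tfrac{1}{2}\ln\beta_n + \tfrac{1}{2}\ln \alpha_n - \tfrac{1}{2}\tfrac{\alpha_n}{\beta_n}(\xi - \mu_n)^2$. Subtracting the latter from the former, the $\ln(2\pi)$, $\ln\beta_n$, and quadratic terms cancel, leaving exactly $\tfrac{1}{2}\psi(\alpha_n) - \tfrac{1}{2}\ln\alpha_n - \tfrac{1}{2\kappa_n} = -\tfrac{1}{2}\!\left(\tfrac{1}{\kappa_n} + \ln\alpha_n - \psi(\alpha_n)\right)$, which is the claimed identity.

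The proof is essentially a direct calculation, so there is no serious obstacle; the only mildly non-trivial step is recognising that the $\xi$-dependent part of $\bE_{NG}[\lambda(\xi-\mu)^2]$ reassembles into a perfect square $\tfrac{\alpha_n}{\beta_n}(\xi-\mu_n)^2$ (because the posterior mean of $\lambda\mu$ factorises as $\mu_n \bE_{NG}[\lambda]$), which is exactly what allows the result to be written as a log-Gaussian in $\xi$ centred at $\mu_n$ with variance $\beta_n/\alpha_n$, plus a $\xi$-independent correction $G(\tau_n)$.
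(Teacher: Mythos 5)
Your proof is correct and follows essentially the same route as the paper's: expand the log-Gaussian, apply linearity of expectation with the normal-gamma moments from \Cref{lem:normal-gamma-moments}, complete the square in $\xi$, and identify the $\xi$-independent remainder as $-\tfrac{1}{2}\left(\tfrac{1}{\kappa_n} + \ln\alpha_n - \psi(\alpha_n)\right)$. The only cosmetic difference is that you compare directly against the written-out form of $\ln \cN(\xi \mid \mu_n, \beta_n/\alpha_n)$, whereas the paper reassembles the Gaussian density by adding and subtracting $\ln \alpha_n$; the computations are identical.
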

\begin{proof}
    First, observe that the natural logarithm of the Gaussian distribution may be re-written as
    \begin{align*}\ln \cN(\xi \mid \mu, \lambda^{-1}) &= \ln \left( \frac{\lambda^{\frac{1}{2}}}{(2\pi)^\frac{1}{2}} \exp \left( - \frac{\lambda}{2} (\xi - \mu)^2 \right) \right) \\
    &= \frac{1}{2} \ln \lambda - \frac{1}{2} \ln 2\pi - \frac{\lambda}{2} (\xi - \mu)^2 \\
    &= \frac{1}{2} \ln \lambda - \frac{1}{2} \ln 2\pi - \frac{1}{2} \lambda \xi^2 + \lambda \mu \xi - \frac{1}{2} \lambda \mu^2.
    \end{align*}
    In what follows, for shorthand, we denote the expectation~$\bE_{NG(\mu, \lambda \mid \mu_n, \kappa_n, \alpha_n, \beta_n)}$ as $\bE_{\mu, \lambda \sim NG}$:
    
    \begin{align*}
        &\bE_{\mu, \lambda \sim NG}\left[ \ln \cN(\xi \mid \mu, \lambda^{-1} ) \right] \\
        &\quad\overset{(i)}{=} \bE_{\mu, \lambda \sim NG}\left[ \frac{1}{2} \ln \lambda - \frac{1}{2} \ln 2\pi - \frac{1}{2} \lambda \xi^2 + \lambda \mu \xi - \frac{1}{2} \lambda \mu^2 \right] \\
        &\quad\overset{(ii)}{=} - \frac{1}{2} \ln 2\pi + \frac{1}{2} \bE_{\mu, \lambda \sim NG}\left[ \ln \lambda \right] - \frac{1}{2} \xi^2 \cdot \bE_{\mu, \lambda \sim NG}\left[ \lambda \right] + \xi \cdot \bE_{\mu, \lambda \sim NG}\left[ \lambda \mu \right] - \frac{1}{2} \bE_{\mu, \lambda \sim NG}\left[ \lambda \mu^2 \right] \\
        &\quad\overset{(iii)}{=} - \frac{1}{2} \ln 2\pi + \frac{1}{2} (\psi(\alpha_n) - \ln \beta_n) - \frac{1}{2}\xi^2 \frac{\alpha_n}{\beta_n} + \xi \mu_n \frac{\alpha_n}{\beta_n} - \frac{1}{2}\left(\frac{1}{\kappa_n} + \mu_n^2\frac{\alpha_n}{\beta_n} \right) \\
        &\quad\overset{(iv)}{=} - \frac{1}{2} \ln 2\pi + \frac{1}{2} \left(\psi(\alpha_n) - \ln \beta_n - \frac{1}{\kappa_n}\right) - \frac{\alpha_n}{2\beta_n}\left( \xi - \mu_n \right)^2 \\
        &\quad\overset{(v)}{=} - \frac{1}{2} \ln 2\pi - \frac{1}{2} \left(\ln \beta_n - \ln \alpha_n + \ln \alpha_n - \psi(\alpha_n) + \frac{1}{\kappa_n}\right) + \ln \exp \left(- \frac{1}{2\frac{\beta_n}{\alpha_n}}\left( \xi - \mu_n \right)^2 \right)\\
        &\quad\overset{(vi)}{=} - \frac{1}{2} \left(\ln \alpha_n - \psi(\alpha_n) + \frac{1}{\kappa_n}\right) - \frac{1}{2} \ln \left(2\pi \frac{\beta_n}{\alpha_n}\right) + \ln \exp \left(- \frac{1}{2\frac{\beta_n}{\alpha_n}}\left( \xi - \mu_n \right)^2 \right) \\
        &\quad\overset{(vii)}{=} - \frac{1}{2} \left( \frac{1}{2\alpha_n} + I_{\alpha_n} + \frac{1}{\kappa_n} \right) + \ln \left( \frac{1}{\sqrt{2\pi \frac{\beta_n}{\alpha_n}}} \exp \left( - \frac{1}{2 \frac{\beta_n}{\alpha_n}}(\xi - \mu_n)^2 \right) \right) \\
        &\quad\overset{(viii)}{=} - \frac{1}{2} \left( \ln \alpha_n - \psi(\alpha_n) + \frac{1}{\kappa_n} \right) + \ln \cN\left( \xi\mid\mu_n, \frac{\beta_n}{\alpha_n}\right)
    \end{align*}
    where in equation (i) we take the expectation over the normal-gamma distribution;
    (ii) we apply linearity of expectation;
    (iii) we use the moment-generating functions from~\Cref{lem:normal-gamma-moments};
    (iv) we complete the square;
    (v) we add and subtract~$\ln \alpha_n$;
    (vi) and (vii) we re-arrange and apply log identities;
    and finally in (viii) we use the definition of a Gaussian probability density function.
\end{proof}

\begin{corollary} \label{cor:gauss-unknown-var}
    When the likelihood is a Gaussian distribution with unknown mean and variance, and the conjugate prior and posterior are normal-gamma distributions (see \Cref{def:unknown-gaussian}), then \Cref{thm:exact-reformulation} holds with $\bar{\theta}_n = (\mu_n, \frac{\beta_n}{\alpha_n})$ and $G(\tau_n) = \frac{1}{2} \left( \ln \alpha_n - \psi(\alpha_n) + \frac{1}{\kappa_n} \right)$.
\end{corollary}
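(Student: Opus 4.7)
The plan is to observe that this corollary is a direct consequence of \Cref{lem:expected-normal-gamma-log} combined with \Cref{thm:exact-reformulation}, mirroring the structure of the proof of \Cref{cor:gauss-known-var}. The hypothesis of \Cref{thm:exact-reformulation} is that the conditions of \Cref{lem:expected-kl} hold, which amounts to verifying identity (\ref{eq:expected-log-condition}): that the expected log-likelihood under the posterior factors as a log-likelihood at some effective parameter $\bar{\theta}_n$ minus a deterministic correction $G(\tau_n)$ depending only on the posterior hyperparameters.

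First, I would identify the appropriate choice of $\bar{\theta}_n$ and $G(\tau_n)$ from the conclusion of \Cref{lem:expected-normal-gamma-log}. Reading that lemma, the expected log-density under the normal-gamma posterior $NG(\mu, \lambda \mid \mu_n, \kappa_n, \alpha_n, \beta_n)$ equals $\ln \cN(\xi \mid \mu_n, \beta_n/\alpha_n) - \frac{1}{2}(1/\kappa_n + \ln \alpha_n - \psi(\alpha_n))$. Matching term by term with (\ref{eq:expected-log-condition}), this forces the identification $\bar{\theta}_n = (\mu_n, \beta_n/\alpha_n)$ (interpreted as the mean and variance parameters of the surrogate Gaussian $\bP_{\bar{\theta}_n}$) and $G(\tau_n) = \frac{1}{2}(\ln \alpha_n - \psi(\alpha_n) + 1/\kappa_n)$, with $\tau_n = (\mu_n, \kappa_n, \alpha_n, \beta_n)$.

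Next, I would note that with this identification, the hypothesis of \Cref{lem:expected-kl} is satisfied, so the conclusion $\bE_{\theta \sim \Pi}[\KL(\bQ \Vert \bP_\theta)] = \KL(\bQ, \bP_{\bar{\theta}_n}) + G(\tau_n)$ holds for every $\bQ \in \cP(\Xi)$. This in turn verifies the premise of \Cref{thm:exact-reformulation}, so the strong duality reformulation (\ref{eq:conjecture-exponential-families}) applies for every $\epsilon \geq \epsilon_{\min}(n) = G(\tau_n)$, yielding the corollary.

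There is essentially no obstacle: the substantive work is carried by \Cref{lem:expected-normal-gamma-log}, where the normal-gamma moments from \Cref{lem:normal-gamma-moments} and the completion of the square do the heavy lifting. The only mild subtlety to flag in the write-up is that $\bar{\theta}_n$ is vector-valued $(\mu_n, \beta_n/\alpha_n)$ because the underlying likelihood has two parameters; this is a minor notational point rather than a mathematical difficulty, and it is consistent with the formulation of \Cref{thm:exact-reformulation} where $\bar{\theta}_n \in \Theta$ and $\Theta \subseteq \bR^k$ is allowed to be multi-dimensional.
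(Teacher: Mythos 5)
Your proposal is correct and follows exactly the paper's route: the paper's proof of \Cref{cor:gauss-unknown-var} likewise just invokes \Cref{lem:expected-normal-gamma-log} to verify condition \eqref{eq:expected-log-condition} and then applies \Cref{thm:exact-reformulation}. Your term-by-term identification of $\bar{\theta}_n = (\mu_n, \beta_n/\alpha_n)$ and $G(\tau_n)$ is the same matching the paper performs implicitly.
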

\begin{proof}
    \Cref{lem:expected-normal-gamma-log} shows that the condition \eqref{eq:expected-log-condition} in \Cref{lem:expected-kl} holds, thus \Cref{thm:exact-reformulation} follows.
\end{proof}
\paragraph{Tolerance level $\epsilon$}
In the well-specified case, where we assume that $\bP^\star := \bP_\theta^\star$ for some $\theta^\star \in \Theta$, it is easy to obtain the required size of the ambiguity set exactly. Let $\theta^\star := (\mu^\star, {\lambda^\star}^{-1})$ and $\bP^\star := N(\mu^\star, {\lambda^\star}^{-1})$. Using \Cref{cor:gauss-unknown-var} we obtain:
\begin{align*}
    &\bE_{\mu, \lambda \sim NG(\mu, \lambda \mid \mu_n, \kappa_n, \alpha_n, \beta_n)} \left[\KL(\bP^\star, \cN(\xi \mid \mu, \lambda^{-1})) \right] \\
    & \quad \quad = \KL\left( \bP^\star~\Vert~\cN\left(\mu_n, \frac{\beta_n}{\alpha_n}\right) \right) + \frac{1}{2} \left( \frac{1}{\kappa_n} + \ln \alpha_n - \psi(\alpha_n) \right) \\
    & \quad \quad = \ln\left(\sqrt{\lambda^\star \frac{\beta_n}{\alpha_n}}\right) + \frac{{\lambda^\star}^{-1} + (\mu^\star - \mu_n)^2}{2 \frac{\beta_n}{\alpha_n}} - \frac{1}{2} + \frac{1}{2} \left( \frac{1}{\kappa_n} + \ln \alpha_n - \psi(\alpha_n) \right) \\
    &\quad \quad = \frac{1}{2} \left(\ln\left(\lambda^\star \beta_n \right)
    + \frac{{\lambda^\star}^{-1} + (\mu^\star - \mu_n)^2)}{2 \frac{\beta_n}{\alpha_n}} - 1
    + \frac{1}{\kappa_n} - \psi(\alpha_n)
    \right).
\end{align*}

\subsection{Exponential likelihood with conjugate gamma prior}

\begin{definition} \label{def:exp-gamma-model}
   The likelihood~$p(\xi \mid \theta)$ is an exponential distribution $\distexp(\xi \mid \theta)$ where $\theta > 0$ is the rate parameter.
    The prior~$\pi(\theta)$ is a gamma distribution $\distgamma(\theta \mid \alpha_0, \beta_0)$ with shape~$\alpha_0 > 0$ and rate~$\beta_0 > 0$.
    The parameters~$\alpha_n, \beta_n$ of the posterior~$\pi(\theta \mid \cD) = \distgamma(\theta \mid \alpha_n, \beta_n)$ are given by $\alpha_n = \alpha_0 + n$ and $\beta_n = \beta_0 + \sum_{\xi_i \in \cD}~\xi_i$. 
\end{definition}

\begin{lemma} \label{lem:exp-gamma}
    When the likelihood is an exponential distribution with gamma prior and posterior (see \Cref{def:exp-gamma-model}), then
    $$\bE_{\distgamma(\theta \mid \alpha_n, \beta_n)}\left[ \ln \distexp(\xi \mid \theta) \right] = \ln \distexp\left( \xi \mid \frac{\alpha_n}{\beta_n}\right) + \psi(\alpha_n) - \ln \alpha_n.$$
\end{lemma}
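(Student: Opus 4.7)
The plan is to compute the expectation directly by expanding the log-likelihood and using two standard moments of the gamma distribution, then match the result to the target right-hand side by adding and subtracting $\ln \alpha_n$.

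First I would write out $\ln \distexp(\xi \mid \theta) = \ln \theta - \theta \xi$, which is linear in $\theta$ and $\ln \theta$, so expectation passes through in one step:
\begin{align*}
    \bE_{\theta \sim \distgamma(\alpha_n, \beta_n)}\left[\ln \distexp(\xi \mid \theta)\right] = \bE_{\theta \sim \distgamma(\alpha_n, \beta_n)}[\ln \theta] - \xi \cdot \bE_{\theta \sim \distgamma(\alpha_n, \beta_n)}[\theta].
\end{align*}

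Next I would quote the two relevant moments of the gamma distribution, $\bE[\ln \theta] = \psi(\alpha_n) - \ln \beta_n$ and $\bE[\theta] = \alpha_n/\beta_n$. These are the marginals of the normal-gamma moments already recorded in \Cref{lem:normal-gamma-moments}, so they can be cited rather than rederived. Substituting gives
\begin{align*}
    \bE_{\theta \sim \distgamma(\alpha_n, \beta_n)}\left[\ln \distexp(\xi \mid \theta)\right] = \psi(\alpha_n) - \ln \beta_n - \frac{\alpha_n}{\beta_n}\xi.
\end{align*}

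Finally I would massage this into the form advertised in the lemma. The trick is to add and subtract $\ln \alpha_n$:
\begin{align*}
    \psi(\alpha_n) - \ln \beta_n - \frac{\alpha_n}{\beta_n}\xi = \bigl(\ln \alpha_n - \ln \beta_n - \tfrac{\alpha_n}{\beta_n}\xi\bigr) + \bigl(\psi(\alpha_n) - \ln \alpha_n\bigr),
\end{align*}
and recognise the first bracket as $\ln\bigl(\tfrac{\alpha_n}{\beta_n} e^{-(\alpha_n/\beta_n)\xi}\bigr) = \ln \distexp\bigl(\xi \mid \tfrac{\alpha_n}{\beta_n}\bigr)$, which is exactly the desired decomposition.

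There is no real obstacle here: the computation is entirely routine once the two gamma moments are in hand. The only point that requires a moment of thought is the rewriting step at the end, where one has to spot that separating $\ln \alpha_n$ from $\psi(\alpha_n)$ is what converts $\ln \beta_n$ into $\ln(\beta_n/\alpha_n)$ and thus produces the exponential density with rate $\bar\theta_n = \alpha_n/\beta_n$. This matches the entry for $\distexp$ in Table \ref{tab:expon-fam} with $G(\tau_n) = \ln \alpha_n - \psi(\alpha_n)$, confirming that the hypothesis \eqref{eq:expected-log-condition} of \Cref{lem:expected-kl} is satisfied.
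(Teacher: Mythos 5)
Your proposal is correct and follows essentially the same route as the paper's proof: expand $\ln \distexp(\xi \mid \theta) = \ln\theta - \theta\xi$, apply linearity with the gamma moments $\bE[\ln\theta] = \psi(\alpha_n) - \ln\beta_n$ and $\bE[\theta] = \alpha_n/\beta_n$, then add and subtract $\ln\alpha_n$ to recognise the exponential density with rate $\alpha_n/\beta_n$. The only cosmetic difference is that you source the moments from \Cref{lem:normal-gamma-moments} (whose $\lambda$-marginal is the gamma distribution), whereas the paper quotes the gamma logarithm expectation directly.
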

\begin{proof}
    Starting from the left-hand side, we take the log of the PDF of the exponential distribution, then use the logarithm expectation of the gamma distribution, and finally re-arrange using log identities:
    \begin{align*}
        \bE_{\distgamma(\theta \mid \alpha_n, \beta_n)}\left[ \ln \distexp(\xi \mid \theta) \right] &= \bE_{\distgamma(\theta \mid \alpha_n, \beta_n)}\left[ \ln \theta - \theta \xi \right] \\
        &= \psi(\alpha_n) - \ln \beta_n - \frac{\alpha_n}{\beta_n} \xi \\
        &= \psi(\alpha_n) - \ln \alpha_n + \ln \frac{\alpha_n}{\beta_n} - \frac{\alpha_n}{\beta_n} \xi \\
        &= \psi(\alpha_n) - \ln \alpha_n + \ln \left( \frac{\alpha_n}{\beta_n} \exp\left( - \frac{\alpha_n}{\beta_n} \xi \right) \right) \\
        &= \psi(\alpha_n) - \ln \alpha_n + \ln \distexp\left(\xi \mid \frac{\alpha_n}{\beta_n}\right).
    \end{align*}
    The last line follows by the definition of the PDF of the exponential distribution.
\end{proof}

\begin{corollary} \label{cor:exp}
    When the likelihood is an exponential distribution with gamma prior and posterior, then \Cref{thm:exact-reformulation} holds with $\bar{\theta}_n = \frac{\alpha_n}{\beta_n}$ and $G(\tau_n) = \psi(\alpha_n) - \ln \alpha_n$.
\end{corollary}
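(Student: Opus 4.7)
The plan is to treat this corollary as a direct consequence of Lemma \ref{lem:exp-gamma} combined with Lemma \ref{lem:expected-kl} and Theorem \ref{thm:exact-reformulation}, following exactly the same two-line template used for Corollaries \ref{cor:gauss-known-var} and \ref{cor:gauss-unknown-var}. The substantive work has already been done inside Lemma \ref{lem:exp-gamma}; what remains is bookkeeping to match that lemma against the hypotheses of the general machinery.

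First, I would rewrite the conclusion of Lemma \ref{lem:exp-gamma} so that it is syntactically identical to the required identity \eqref{eq:expected-log-condition}. Since $\psi(\alpha_n) - \ln \alpha_n = -\bigl(\ln \alpha_n - \psi(\alpha_n)\bigr)$, the lemma gives
$$\bE_{\distgamma(\theta \mid \alpha_n,\beta_n)}\!\bigl[ \ln \distexp(\xi \mid \theta) \bigr] \;=\; \ln \distexp\!\left(\xi \;\Big|\; \tfrac{\alpha_n}{\beta_n}\right) \;-\; \bigl(\ln \alpha_n - \psi(\alpha_n)\bigr),$$
which matches the form $\bE_\Pi[\ln p(\xi \mid \theta)] = \ln p(\xi \mid \bar{\theta}_n) - G(\tau_n)$ from \eqref{eq:expected-log-condition} with $\bar{\theta}_n = \alpha_n/\beta_n$ and $G(\tau_n) = \ln \alpha_n - \psi(\alpha_n)$ (which agrees with Table \ref{tab:expon-fam}).

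With the hypothesis of Lemma \ref{lem:expected-kl} thus verified, I would invoke that lemma to conclude
$$\bE_{\theta \sim \Pi}\bigl[ \KL(\bQ \Vert \bP_\theta) \bigr] \;=\; \KL\!\left(\bQ \,\Big\Vert\, \distexp\!\bigl(\cdot \mid \tfrac{\alpha_n}{\beta_n}\bigr)\right) + \bigl(\ln \alpha_n - \psi(\alpha_n)\bigr),$$
and then apply Theorem \ref{thm:exact-reformulation} with these values of $\bar{\theta}_n$ and $G(\tau_n)$ to obtain the stated closed-form reformulation of the worst-case risk. Non-negativity of $G(\tau_n)$ for $\alpha_n \geq 1$, which is needed so that the tolerance condition $\epsilon \geq \epsilon_{\min}(n) = G(\tau_n)$ in \eqref{eq:min-epsilon} is meaningful, follows from the standard asymptotic bound $\psi(z) < \ln z$ for $z > 0$.

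There is essentially no obstacle: the heavy lifting (computing $\bE_{\distgamma}[\ln \theta] = \psi(\alpha_n) - \ln \beta_n$ and rearranging) is already packaged inside Lemma \ref{lem:exp-gamma}. The only point where I would be careful is the sign convention on~$G$, which enters \eqref{eq:expected-log-condition} with a minus sign but appears in the concluding reformulation \eqref{eq:conjecture-exponential-families} with a plus sign; I would double-check that the same choice $G(\tau_n) = \ln \alpha_n - \psi(\alpha_n)$ is propagated consistently through both invocations so that the $\gamma(\epsilon - G(\tau_n))$ term in Theorem \ref{thm:exact-reformulation} carries the correct sign.
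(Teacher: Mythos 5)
Your proof is correct and takes essentially the same route as the paper's own one-line argument: verify condition \eqref{eq:expected-log-condition} via \Cref{lem:exp-gamma}, then invoke \Cref{lem:expected-kl} and \Cref{thm:exact-reformulation}. Your caution about the sign convention is well placed and worth making explicit: the value you derive, $G(\tau_n) = \ln \alpha_n - \psi(\alpha_n)$, agrees with \Cref{tab:expon-fam} and is non-negative as required by \eqref{eq:min-epsilon}, whereas the corollary as printed states $G(\tau_n) = \psi(\alpha_n) - \ln \alpha_n$, which is negative and appears to be a sign typo in the statement rather than an error in your argument.
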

\begin{proof}
    \Cref{lem:exp-gamma} shows that the condition \eqref{eq:expected-log-condition} in \Cref{lem:expected-kl} holds, thus \Cref{thm:exact-reformulation} follows.
\end{proof}
\paragraph{Tolerance level $\epsilon$}
In the well-specified case, where we assume that $\bP^\star := \bP_\theta^\star$ for some $\theta^\star \in \Theta$, it is easy to obtain the required size of the ambiguity set exactly. Let $\theta^\star$ be the true rate parameter, i.e. $\bP^\star := \distexp(\theta^\star)$. Using \Cref{cor:exp} we obtain:
\begin{align*}
    &\bE_{\distgamma(\theta \mid \alpha_n, \beta_n)} \left[\KL(\bP^\star, \distexp(\theta) \right] \\
    & \quad \quad = \KL\left( \bP^\star~\Vert~\distexp\left(\frac{\alpha_n}{\beta_n}\right) \right) + \psi(\alpha_n) - \ln(\alpha_n)  \\
    & \quad \quad = \ln(\theta^\star) - \ln\left(\frac{\alpha_n}{\beta_n}\right) + \frac{\alpha_n}{\beta_n \theta^\star} - 1 + \psi(\alpha_n) - \ln(\alpha_n). 
\end{align*}

\section{Proofs of Theoretical Results} \label{app:proofs}
\subsection{Proofs of DRO-BAS upper bound in Equation (\ref{eq:kl-upper-bound})} \label{app:prop-2}

Before proving the required upper bound, we recall the definition of the KL divergence and its convex conjugate. 
\begin{definition}[KL-divergence]
    Let~$\mu,\nu \in \cP(\Xi)$ and assume $\mu$ is absolutely continuous with respect to $\nu$ ($\mu \ll \nu$).
    The $KL$-divergence of $\mu$ with respect to $\nu$ is defined as: 
    $$\KL(\mu \Vert \nu):=  \int_{\Xi} \ln\left(\frac{\mu(d\xi)}{\nu(d\xi)} \right) \mu(d\xi).$$
\end{definition}
\begin{lemma}[Conjugate of the KL-divergence]\label{lem:kl-conjugate}
    Let $\nu \in \cP(\Xi)$ be non-negative and finite.
    The convex conjugate~$\KL^\star(\cdot \Vert \nu)$ of $\KL(\cdot \Vert \nu)$ is
    \begin{align*}
    \KL^\star(\cdot \Vert \nu)(h) = \ln\left( \int_{\Xi} \exp(h) d\nu \right).
    \end{align*}
\end{lemma}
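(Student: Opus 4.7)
The plan is to prove the Donsker--Varadhan variational formula underlying the conjugate identity. Unfolding the definition of the convex conjugate with respect to the natural pairing $\langle \mu, h \rangle = \int_\Xi h\, d\mu$, the claim is
\[
    \sup_{\mu \in \cP(\Xi)} \left\{ \int_\Xi h\, d\mu - \KL(\mu \Vert \nu) \right\} = \ln \int_\Xi \exp(h)\, d\nu,
\]
with the convention $\KL(\mu \Vert \nu) = +\infty$ when $\mu \not\ll \nu$, so that the supremum is effectively over measures absolutely continuous with respect to $\nu$.

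The key move will be to introduce the tilted (Gibbs) measure $\mu^\star$ defined by
\[
    \frac{d\mu^\star}{d\nu} := \frac{\exp(h)}{Z}, \qquad Z := \int_\Xi \exp(h)\, d\nu,
\]
and to prove the pointwise identity
\[
    \int_\Xi h\, d\mu - \KL(\mu \Vert \nu) = \ln Z - \KL(\mu \Vert \mu^\star)
\]
for every $\mu \ll \nu$. This follows by a direct computation: expand $\KL(\mu \Vert \mu^\star) = \int \ln(d\mu/d\mu^\star)\, d\mu$, use the chain rule $d\mu/d\mu^\star = (d\mu/d\nu)\cdot(Z/\exp(h))$, and split the resulting logarithm into the pieces $\ln(d\mu/d\nu)$, $-h$, and $\ln Z$.

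Once this identity is in hand, the conclusion is immediate. Since $\KL(\mu \Vert \mu^\star) \geq 0$ by Gibbs's inequality, the right-hand side is at most $\ln Z$, giving the upper bound. Equality is achieved at $\mu = \mu^\star$, where $\KL(\mu^\star \Vert \mu^\star) = 0$; a quick verification confirms $\mu^\star$ is a bona fide probability measure (it integrates to $1$) whenever $Z \in (0, \infty)$. This yields the supremum exactly and finishes the proof.

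The main obstacle is bookkeeping rather than deep analysis: one must be careful that $\mu^\star$ is well-defined, which requires $Z < \infty$ and $Z > 0$ (the degenerate cases $Z \in \{0, \infty\}$ should be treated separately, with the formula reducing to $-\infty$ or $+\infty$ via the same tilting argument or by choosing suitable approximating measures). One also needs to justify interchanging integrals when manipulating $\KL(\mu \Vert \mu^\star)$; this is routine since the integrand $\ln(d\mu/d\mu^\star)$ is $\mu$-integrable precisely when $\KL(\mu \Vert \nu) < \infty$ and $\int h\, d\mu$ is finite, and the case when either side is $+\infty$ can be handled by the variational inequality direction of the identity.
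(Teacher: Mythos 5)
Your proof is correct, but it takes a different route from the paper only in the sense that the paper does not actually prove this lemma: it simply cites Proposition~28 and Example~7 of Agrawal and Horel (2021). What you have written out is the classical Donsker--Varadhan / Gibbs variational argument that underlies that cited result: the tilting identity $\int h\,d\mu - \KL(\mu \Vert \nu) = \ln Z - \KL(\mu \Vert \mu^\star)$ with $d\mu^\star/d\nu = \exp(h)/Z$ is exactly right (it follows from the chain rule for Radon--Nikodym derivatives as you describe), nonnegativity of $\KL(\mu \Vert \mu^\star)$ gives the upper bound, and $\mu = \mu^\star$ attains it. Two small points of care. First, note that since $\exp(h) > 0$ pointwise and $\nu$ is a probability measure, the case $Z = 0$ cannot occur, so the only genuine degeneracy is $Z = +\infty$; in the application of the lemma (with $h = f_x/\gamma$) one implicitly assumes the exponential moment is finite, so your remark about handling $Z = \infty$ via approximating measures is sufficient but not needed for the paper's use. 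Second, make sure the domain over which the conjugate is taken matches the convention of the statement: you take the supremum over $\cP(\Xi)$ with $\KL(\mu \Vert \nu) = +\infty$ for $\mu \not\ll \nu$, which is the convention used both in the cited reference and in the duality arguments of the paper (where the supremum is over $\bQ \in \cP(\Xi)$), so this is consistent. The net benefit of your version is self-containedness; the paper's citation buys brevity at the cost of the reader having to unpack the reference.
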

\begin{proof}
    See Proposition~28 and Example~7 in \cite{agrawal2021optimal}.
\end{proof}

\begin{proof}[Proof of Equation \ref{eq:kl-upper-bound}]
    The result follows from a standard Lagrangian duality argument and an application of Jensen's inequality.
    More specifically, we introduce a Lagrangian variable~$\gamma \geq 0$ for the expected-ball constraint on the left-hand side of \eqref{eq:kl-upper-bound} as follows:
    \begin{align*}
        \sup_{\bQ: \bE_{\theta \sim \Pi} [\KL(Q \Vert \bP_\theta)] \leq \epsilon}~\bE_{Q}[f_x]
        &\overset{(i)}{\leq} \inf_{\gamma \geq 0}~\sup_{\bQ \in \cP(\Xi)}~\bE_{Q}[f_x] + \gamma \epsilon - \gamma \bE_{\Pi}\left[  \KL(\bQ \Vert \bP_\theta) \right] \\
        &\overset{(ii)}{=} \inf_{\gamma \geq 0}~\gamma \epsilon + \sup_{\bQ \in \cP(\Xi)}~\bE_{Q}[f_x] -  \bE_{\Pi}\left[ \gamma \KL(\bQ \Vert \bP_\theta)  \right] \\
        &\overset{(iii)}{=} \inf_{\gamma \geq 0}~\gamma \epsilon + \left( \bE_{\Pi}\left[  \gamma \KL(\cdot \Vert \bP_\theta) \right] \right)^\star (f_x) \\
        &\overset{(iv)}{\leq}  \inf_{\gamma \geq 0}~\gamma \epsilon + \bE_{\Pi}\left[ \left( \gamma \KL(\cdot \Vert \bP_\theta) \right)^\star(f_x)  \right] \\
        &\overset{(v)}{=} \inf_{\gamma \geq 0}~\gamma \epsilon + \bE_{\Pi}\left[ \gamma \ln \bE_{\bP_\theta}\left[ \exp\left(\frac{f_x}{\gamma}\right) \right] \right].\\
    \end{align*}
    Inequality (i) holds by weak duality.
    Equality (ii) holds by linearity of expectation and a simple rearrangement.
    Equality (iii) holds by the definition of the conjugate function.
    Inequality (iv) holds by Jensen's inequality $(\bE[\cdot])^\star \leq \bE[(\cdot)^\star]$ because the conjugate is a convex function.
    Equality (v) holds by \Cref{lem:kl-conjugate} and the fact that for $\gamma \geq 0$ and function $\phi$, $(\gamma \phi)^\star(y) = \gamma \phi^\star(y/\gamma)$.
\end{proof}

\subsection{Proof of \Cref{lem:expected-kl}}
Starting from the left-hand side, we have
    \begin{align*}
    \bE_{\Pi}\left[  \KL(\bQ \Vert \bP_\theta) \right]
    &\overset{(i)}{=} \bE_{\theta \sim \pi(\theta \mid \cD)}\left[ \int_\Xi q(\xi) \ln\left( \frac{q(\xi)}{p(\xi \mid \theta)} \right) \d{\xi} \right] \\
    &\overset{(ii)}{=} \bE_{\theta \sim \pi(\theta \mid \cD)}\left[ \int_\Xi q(\xi) \ln\left( q(\xi) \right) - q(\xi) \ln\left( {p(\xi \mid \theta)} \right) \d{\xi} \right] \\
    &\overset{(iii)}{=} \int_\Xi q(\xi) \ln\left( q(\xi) \right) - q(\xi) \cdot \bE_{\theta \sim \pi(\theta \mid \cD)}\left[ \ln\left( {p(\xi \mid \theta)} \right) \right] \d{\xi}\\
    &\overset{(iv)}{=} \int_\Xi q(\xi) \ln\left( q(\xi) \right) - q(\xi) \cdot \left(\ln p(\xi \mid \bar{\theta}_n) - G(\tau_n) \right) \d{\xi} \\
    &\overset{(v)}{=} \int_\Xi q(\xi) \ln\left( \frac{q(\xi)}{p(\xi \mid \bar{\theta}_n)} \right) \d{\xi} + \int_\Xi~ q(\xi) \cdot G(\tau_n) \d{\xi} \\
    &\overset{(vi)}{=} \KL(\bQ \Vert  \bP_{\bar{\theta}_n}) + \bE_\bQ[G(\tau_n)] \\
    &\overset{(vii)}{=} \KL(q(\xi) \Vert p(\xi \mid \bar{\theta}_n)) + G(\tau_n). \\
\end{align*}
where (i) is by the definition of the KL-divergence; (ii) follows by $\log$ properties; (iii) holds by linearity of expectation; (iv) holds by condition~\eqref{eq:expected-log-condition} in \Cref{lem:expected-kl}; (v) holds by rearrangement and properties of $\log$; (vi) holds by the definition of the KL-divergence and the definition of $\bE_\bQ$; and (vii) holds by the expected value of a constant.

\subsection{Proof of \Cref{thm:exact-reformulation}}

We begin by restating the Lagrangian dual from the proof of \Cref{eq:kl-upper-bound}, but with the added claim that strong duality holds between the primal and dual problems:
\begin{align}\label{eq:strong-duality}
\sup_{\bQ: \bE_{\theta \sim \Pi} [\KL(\bQ \Vert \bP_\theta)] \leq \epsilon}~\bE_{\bQ}[f_x] =\inf_{\gamma \geq 0}~\sup_{\bQ \in \cP(\Xi)}~\bE_{\bQ}[f_x] + \gamma \epsilon - \gamma \bE_{\Pi}\left[  \KL(\bQ \Vert \bP_\theta) \right].
\end{align}

\begin{proof}
The conditions under which our claim of strong duality holds will be proved later.
Next, we substitute the right-hand side of equation (vi) above into the dual problem in \eqref{eq:strong-duality}:
\begin{align*}
    &\sup_{\bQ: \bE_{\theta \sim \Pi} [\KL(\bQ~\Vert~\bP_\theta)] \leq \epsilon}~\bE_{\xi \sim \bQ}[f_x(\xi)] \\
    &\quad \quad = \inf_{\gamma \geq 0}~\gamma \epsilon + \sup_{\bQ \in \cP(\Xi)}~\int_\Xi~f_x(\xi) q(\xi) \d{\xi} - \gamma \left( \KL\left(\bQ~\Vert~ p(\xi \mid \bar{\theta}_n) \right) + G(\tau_n) \right) \\
    &\quad \quad= \inf_{\gamma \geq 0}~\gamma \epsilon + \gamma G(\tau_n) + \left( \gamma~\KL \left(\cdot~ \Vert~ p(\xi \mid \bar{\theta}_n) \right) \right)^\star\left( f_x(\xi) \right) \\
    &\quad \quad= \inf_{\gamma \geq 0}~\gamma (\epsilon - G(\tau_n)) + \gamma \ln \bE_{\xi \sim  p(\xi \mid \bar{\theta}_n)}\left[ \exp\left( \frac{f_x(\xi)}{\gamma} \right) \right],
\end{align*}
where the second and third equality holds by the definition of the conjugate of the KL-divergence and by \Cref{lem:kl-conjugate}.

Finally, it remains to argue that strong duality holds. First, note that the primal problem is a concave optimisation problem with respect to distribution~$\bQ$. Second, when $\epsilon > G(\tau_n)$, then distribution $\hat{\bQ} = p(\xi \mid \bar{\theta}_n)$ is a strictly feasible point to the primal constraint because
$$\bE_{\theta \sim \Pi} [\KL(\hat{\bQ}~\Vert~\bP_\theta)] = 0 < \epsilon - G(\tau_n).$$
\end{proof}

\section{Newsvendor Problem - Additional Details}\label{sec:newsvendor-details}

We provide additional details about our Newsvendor experiment in \Cref{sec:newsvendor-problem} when $\bP^\star$ is a Gaussian distribution with $\mu_\star = 25$ and $\sigma^2_\star = 100$.

\paragraph{Hyperparameters.} The prior and posterior are normal-gamma distributions.
We set the prior hyperparameters to be $\mu_0 = 0$ and $\kappa_0, \alpha_0, \beta_0 = 1$.
The derivation of the hyperparameters can be found in \Cref{def:unknown-gaussian}.

\paragraph{Values of $\epsilon_{\min}$ and $\epsilon^\star$.} From \Cref{tab:expon-fam} and equation~\eqref{eq:min-epsilon}, the value of $\epsilon_{\min}$ is 0.047.
From equation~\eqref{eq:optimal-epsilon}, the average value of $\epsilon^\star$ over all $J$ seeds is 0.089 with standard deviation~0.048.

\paragraph{Implementation.} We implemented the dual problems for DRO-BAS (\Cref{thm:exact-reformulation}) and BDRO \citep{shapiro2023bayesian} in Python using CVXPY version 1.5.2 and the MOSEK solver version 10.1.28. Our implementation uses disciplined parametrized programming \citep{agrawal2019differentiable} which - after an initial warm start for seed $j=1$ - allows us to solve subsequent seeds $j = 2,\ldots,J$ rapidly (see \Cref{tab:solve_time}).
We used a 12-core Dual Intel Xeon E5-2643 v3 @ 3.4 Ghz with 128GB RAM.

\paragraph{Out-of-sample mean and variance.} For a given~$\epsilon$ and seed~$j$, let the optimal solution be~$x^{(j)}(\epsilon)$.
We calculate the out-of-sample mean~$m^{(j)}(\epsilon) = \bE_{\xi \sim \hat{\bP}^{(j)}_m}~[f(x^{(j)}(\epsilon),\xi)]$ and variance~$v^{(j)}(\epsilon) = \text{Var}_{\xi \sim \hat{\bP}^{(j)}_m}[ f(x^{(j)}(\epsilon),\xi) ]$ of the objective under the empirical test distribution~$\hat{\bP}^{(j)}_m$.
For a given~$\epsilon$, the out-of-sample mean~$m(\epsilon)$ and variance~$v(\epsilon)$ across all seeds is
\begin{align*}
    & m(\epsilon) = \frac{1}{J}\sum_{j=1}^J m^{(j)}(\epsilon) && v(\epsilon) = \frac{1}{J}\sum_{j=1}^J~v^{(j)}(\epsilon) + \frac{1}{J-1}\sum_{j=1}^J~\left(m^{(j)}(\epsilon) - m(\epsilon) \right)^2.
\end{align*}
The out-of-sample variance~$v(\epsilon)$ is equal to the mean of the variances~$v^{(j)}(\epsilon)$ plus the variance of the means~$m^{(j)}(\epsilon)$ \citep{gotoh2021calibration}.

\begin{table}[h]
    \caption{Average (AVG) and standard deviation (STD) of the solve time for initial warm start on seed $j=1$ and for subsequent seeds~$j=2\ldots,J$. Distribution~$\bP^\star$ is a Gaussian~$\cN(25, 100)$. \\
    }
    \label{tab:solve_time}
    \centering
    \begin{tabular}{lrrrrr}
        \toprule
        & \multicolumn{5}{c}{Solve time in seconds AVG (STD)} \\\cline{2-6}
         & \multicolumn{2}{c}{$j=1$} && \multicolumn{2}{c}{$j = 2, \ldots, J$} \\\cline{2-3}\cline{5-6}
        $N$ & DRO-BAS & BDRO && DRO-BAS & BDRO \\
        \midrule
        25 & 0.02 (0.00) & 0.07 (0.02) && 0.01 (0.00) & 0.01 (0.00) \\
        100 & 0.03 (0.00) & 0.15 (0.03) && 0.02 (0.00) & 0.03 (0.00) \\
        900 & 0.27 (0.02) & 5.56 (0.05) && 0.40 (0.11) & 0.23 (0.02) \\
        \bottomrule
    \end{tabular}
\end{table}

\paragraph{Solve time.} On the initial warm-start seed~$j=1$, for each~$N$, DRO-BAS solves the dual problem from \Cref{thm:exact-reformulation} faster than the BDRO dual problem.
For example, when~$N=900$, DRO-BAS solves problems in 0.27 seconds on average, whilst BDRO solves problem in 5.56 seconds.
These results suggest that, for fixed~$N$, if the solve is started from scratch with no warm start, then DRO-BAS will solve instances faster than BDRO.
For seeds~$j=2,\ldots,J$, disciplined parametrized programming (DPP) significantly speeds up the solve for BDRO: the average solve time for seeds~$j=2,\ldots,J$ is 0.40 seconds when $N=900$.
In contrast, DPP does not speed up the solve for DRO-BAS: the average solve time for seeds~$j=2,\ldots,J$ is 0.40 when $N=900$.
We conjecture that the speed up for BDRO using DPP is because BDRO has $N_\theta$ Lagrangian dual variables compared to DRO-BAS having exactly one Lagrangian dual variable.
BDRO then benefits from the warm start because it can reuse the presolve effort spent on the $N_\theta$ dual variables spent during the warm start.

\section{Misspecified Supplementary Experiments - Truncated Normal} \label{sec:newsvendor-truncated-normal}

In this section, we present additional experiments when the data-generating process~$\bP^\star$ is a truncated normal distribution.
The truncated normal has mean~$\mu_\star = 10$ and variance~$\sigma^2_\star = 100$.
The likelihood is a Gaussian distribution, so our model is misspecified.
The conjugate prior and posterior are still normal-gamma distributions with the same hyperparameters as \Cref{sec:newsvendor-problem}.
The experimental setup is also the same as \Cref{sec:newsvendor-problem}: the values of $\epsilon$, $N$, $n$, $m$, and $J$ are all specified the same.

\paragraph{Analysis.} When the likelihood is misspecified, \Cref{fig:truncated_normal_mean_variance} shows the out-of-sample mean-variance tradeoff is again a Pareto front.
This is the same conclusion as the well-specified case in \Cref{sec:newsvendor-problem}.
Furthermore, when $N=900$, DRO-BAS has a small advantage on the mean-variance tradeoff.

\paragraph{Solve time.} \Cref{tab:truncated-normal-solve-time} shows the same conclusions about the solve time from \Cref{sec:newsvendor-details} can be made about the solve time for the truncated normal data-generating process.
\begin{figure}
    \centering
    \includegraphics[width=\linewidth]{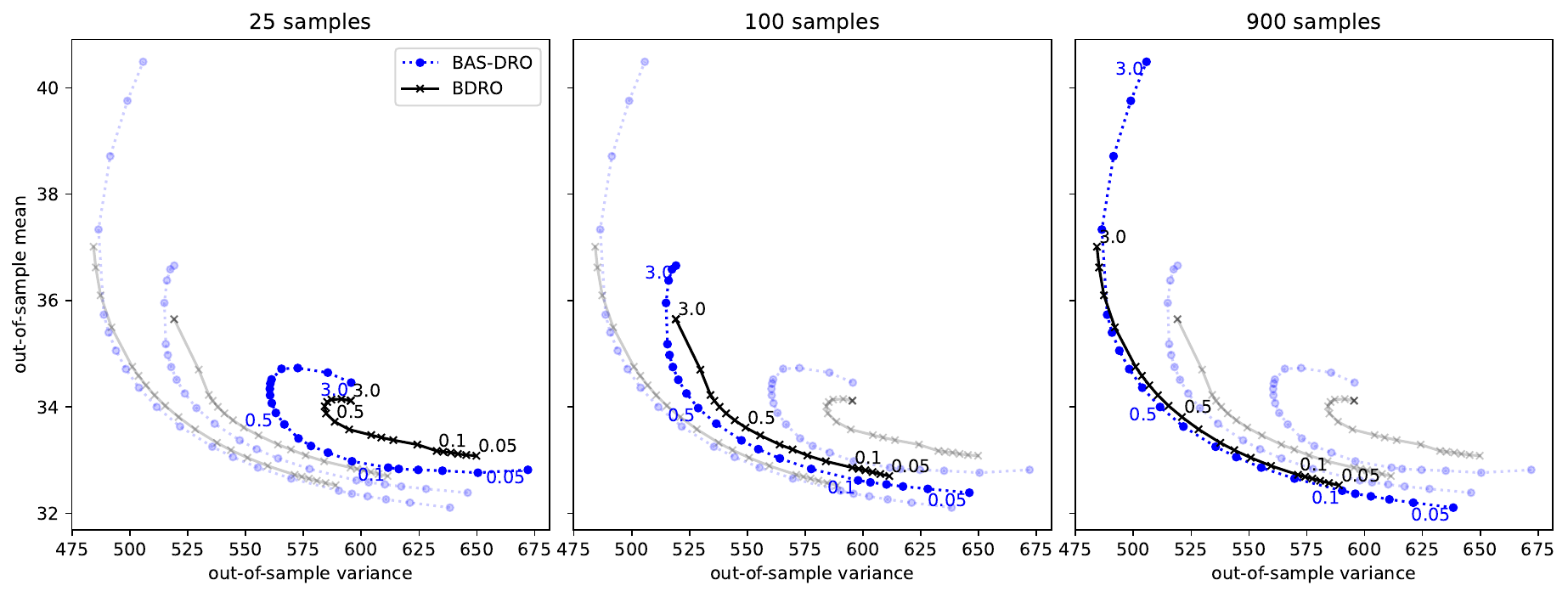}
    \caption{The out-of-sample mean-variance tradeoff on the {\em truncated-normal} dataset when varying the radius~$\epsilon$ for DRO-BAS and BDRO when the total number of samples from the model is 25 (left), 100 (middle), and 900 (right).}
    \label{fig:truncated_normal_mean_variance}
\end{figure}

\begin{table}[t]
    \caption{Average (AVG) and standard deviation (STD) of the solve time for initial warm start on seed $j=1$ and for subsequent seeds~$j=2\ldots,J$. Distribution~$\bP^\star$ is a truncated normal. \\}
    \label{tab:truncated-normal-solve-time}
    \centering
        \begin{tabular}{lrrrrr}
        \toprule
        & \multicolumn{5}{c}{Solve time in seconds AVG (STD)} \\\cline{2-6}
         & \multicolumn{2}{c}{$j=1$} && \multicolumn{2}{c}{$j = 2, \ldots, J$} \\\cline{2-3}\cline{5-6}
        $N$ & DRO-BAS & BDRO && DRO-BAS & BDRO \\
        \midrule
        25 & 0.03 (0.00) & 0.07 (0.02) && 0.01 (0.00) & 0.01 (0.00) \\
        100 & 0.03 (0.00) & 0.14 (0.02) && 0.02 (0.01) & 0.03 (0.00) \\
        900 & 0.27 (0.02) & 5.54 (0.04) && 0.40 (0.11) & 0.23 (0.01) \\
\bottomrule
\end{tabular}
\end{table}

\end{document}